\newcolumntype{M}[1]{>{\centering\arraybackslash}m{#1}}
\newcommand{\xmark}{\ding{55}}
\newcommand{\cmark}{\ding{51}}%
\newcommand{\WL}{\textbf{WL}}
\newcommand\footnoteref[1]{\protected@xdef\@thefnmark{\ref{#1}}\@footnotemark}
\newtheorem{Lemma}{Lemma}
\newtheorem{Definition}{Definition}
\title{Hierarchical and Unsupervised Graph Representation Learning with Loukas's Coarsening}
\newcommand*\samethanks[1][\value{footnote}]{\footnotemark[#1]}
\author{
  Louis Béthune\thanks{Corresponding author.}~\thanks{Equal contribution}\\
  Laboratoire de Physique, UMR 5672\\
  CNRS, ENS de Lyon, UCB Lyon 1\\
  Lyon, France\\
  \texttt{louis.bethune@ens-lyon.fr} \\
  %% examples of more authors
   \And
  Yacouba Kaloga\samethanks\\
  Laboratoire de Physique, UMR 5672\\
  CNRS, ENS de Lyon, UCB Lyon 1\\
  Lyon, France\\
  \texttt{yacouba.kaloga@ens-lyon.fr} \\
  \And
  Pierre Borgnat\\
  Laboratoire de Physique, UMR 5672\\
  CNRS, ENS de Lyon, UCB Lyon 1\\
  Lyon, France\\
  \texttt{pierre.borgnat@ens-lyon.fr} \\
  \And
  Aurélien Garivier\\
UMPA UMR 5669 and LIP UMR 5668\\
  CNRS, ENS de Lyon, UCB Lyon 1\\
  Lyon, France\\
  \texttt{aurelien.garivier@ens-lyon.fr} \\
  \And
  Amaury Habrard\\
  Laboratoire Hubert Curien, UMR 5516\\
  CNRS, University of Lyon, UJM-Saint-Etienne\\
  Saint-Etienne, France\\
  \texttt{amaury.habrard@univ-st-etienne.fr} 
}
\begin{document}
\maketitle

\begin{abstract}
We propose a novel algorithm for unsupervised graph representation learning with attributed graphs. It combines three advantages addressing some current limitations of the literature:
i) The model is inductive: it can embed new graphs without re-training in the presence of new data;
ii) The method takes into account both micro-structures and macro-structures by looking at the attributed graphs at different scales;
iii) The model is end-to-end differentiable: it is a building block that can be plugged into deep learning pipelines and allows for back-propagation.
We show that combining a coarsening method having strong theoretical guarantees with mutual information maximization suffices to produce high quality embeddings. We evaluate them on classification tasks with common benchmarks of the literature. We show that our algorithm is competitive with state of the art among unsupervised graph representation learning methods.
\end{abstract}

% Keywords
\keywords{Graph Representation Learning; Graph2Vec; Graph Convolutional Networks; Graph Coarsening; Unsupervised Learning; Mutual Information Maximization}

\section{Introduction}

Graphs are a canonical way of representing objects and relationships among them. They have proved remarkably well suited in many fields such as chemistry, biology, social sciences or computer science in general. The connectivity information (edges) is often completed by discrete labels or continuous attributes on nodes, resulting in so-called \emph{attributed graphs}. Many real-life problems involving high dimensional objects and their links can be modeled using attributed graphs.

Machine learning offers several ways to solve problems such as classification, clustering or inference, provided that a sufficient amount of training examples is available. Yet, the most classical frameworks are devoted to data living in regular spaces (e.g. vector spaces), and they are not suitable to deal with attributed graphs. One way to overcome this issue is to represent or encode the attributed graphs in such a way that usual machine learning approaches are efficient. A recent take on that is known as \emph{graph representation learning} \cite{hamilton2017representation}: the graphs are embedded in a fixed dimensional latent space such that similar graphs share similar embeddings.

Three properties are desirable in order for a method of attributed graph representation learning to be widely applicable and expressive enough.
We expect a method to be: \textbf{I. Unsupervised} because labels are expensive, and not always available; \textbf{II. Inductive} so that computing the embedding of an unseen graph (not belonging to the training set) can be done on the fly (in contrast to transductive methods);
\textbf{III. Hierarchical} so as to take into account properties on both local and global scales; indeed, structured  information in graphs can reside at various scales, from small neighborhoods to the entire graph.

In order to obtain these three desirable properties for attributed graphs representation learning, the present work introduces a new \textbf{Hierarchical Graph2Vec (HG2V)} model. Like Graph2Vec \cite{narayanan2017graph2vec} with which it shares some similarities, it is based on the maximization of some mutual information. Thanks to a proper use of coarsening, as proposed by  Loukas \cite{loukas2018graph}, it is hierarchical and incorporates information at all scales, from micro-structures like node neighborhoods up to macro-structures (communities, bridges), by considering a pyramid of attributed graphs of decreasing size.

The article is organised as follows. Section~\ref{s:related-work} presents some related work in the literature. In Section~\ref{s:definition-background}, we introduce the notation and the required background. Section~\ref{s:contribution} is dedicated to the detailed presentation of our main contribution: the Hierarchical Graph2Vec method. In Section~\ref{s:evalution}, an experimental study is reported that demonstrates the effectiveness of the framework for various tasks.

\section{Related Work}
\label{s:related-work}

Graph Representation learning is related to a large spectrum of works, from kernel algorithms to graph neural networks.

\noindent \textbf{Kernels methods.} Graph kernels have become a well established and a widely used technique for learning graph representations~\cite{togninalli2019wasserstein,vishwanathan2010graph}. They use handcrafted similarity measures between every pair of graphs. Some are restricted to discrete labels \cite{10.5555/2984093.2984279,shervashidze2011weisfeiler}, while others can handle continuous attributes \cite{NIPS2013_5155,ICML2012Kriege_542,Morris_2016}. The main drawback of kernel methods is the computational burden of building and storing the kernel matrix, which has quadratic complexity unless using approximation techniques.

\noindent \textbf{Infomax Principle} hypothesizes that good representations maximize mutual information between the input data and its embedding. Deep Graph Infomax \cite{velickovic2018deep} and GraphSAGE \cite{hamilton2017inductive} rely on negative sampling to build an estimator of Mutual Information (MI). It is used to produce node embeddings for solving a classification task. InfoGraph \cite{Sun2020InfoGraph} uses the same estimator to produce embeddings for entire attributed graphs. % L54: reviewer 1
  
\noindent \textbf{Graph2Vec} \cite{narayanan2017graph2vec} uses the same MI estimator in combination with Wesfeiler-Lehman procedure (see Section~\ref{s:definition-background}) to produce graph embeddings. It was originally inspired by languages models (especially Word2Vec) and considers node embedding as the vocabulary used to ``write'' a graph. %The method is transductive, it uses discrete labels and is inefficient for continuous attributes due to the use of a discrete hash function.

\noindent \textbf{Graph coarsening.} The aim of graph coarsening is to produce a sequence of graphs of decreasing sizes; it can be done by node pooling, as with Loukas' algorithm \cite{loukas2018graph}, or by node decimation, like for example Kron reduction \cite{Dorfler_2013,bianchi2019hierarchical}. Coarsening can be combined with edge sparsification \cite{bravohermsdorff2019unifying,Dorfler_2013}, so as to reduce the density of coarsened graphs. In another context, DiffPool \cite{NIPS2018_7729} performs graph coarsening using clustering but it learns the pooling function specific to each task in a supervised manner. MinCutPool \cite{bianchi2020spectral} also relies on clustering by optimizing a relaxed version of minCut objective, and provides high quality partitioning of the graph.   
  
\noindent \textbf{Graph Neural Networks.} Developed after the renewed interest in Neural Networks, they are known to offer interesting graph embedding methods \cite{hamilton2017representation}, in addition to solving several graph-related task, see \cite{wu2019comprehensive} (and references therein) for a survey, and specifically the popular Chebyshev GNN~\cite{defferrard2016convolutional}, GCN \cite{kipf2017semi} and GAT  \cite{velickovic2018graph}. Stacking them to increase the receptive field may raise scalability issues~\cite{luan2019break}.  

Still, it has been shown that some easy problems on graphs (diameter approximation, shortest paths, cycle detection, s-t cut) cannot be solved by a GNN of insufficient depth \cite{Loukas2020What}. By combining GNNs with coarsening, e.g. \cite{loukas2018graph}, and node pooling, e.g. \cite{bianchi2019hierarchical,Gama_2019,NIPS2018_7729}, those impossibility results no longer apply. This combination thus helps in designing a method encompassing all structural properties of graphs.  

\begin{table}[t]
%\raggedright
\hspace*{-0.0cm}
\centering
\begin{threeparttable}
\begin{tabular}{|M{3.6cm}|M{1.8cm}M{1.8cm}M{1.8cm}M{2.3cm}M{2.3cm}|}
\hline
\textbf{Method} & \textbf{Continuous attributes} & \textbf{Complexity}  (training) & \textbf{Complexity} (inference) & \textbf{End-to-end differentiable} & \textbf{Supervised}\\
\hline
\hline
Kernel methods, e.g. \WL-OA \cite{kriege2016valid}, \textbf{WWL} \cite{togninalli2019wasserstein} & \cmark & $\mathcal{O}(N^2)$\tnote{*} & $\mathcal{O}(N)$\tnote{*} & \xmark & \xmark\\
\hline
Graph2Vec \cite{narayanan2017graph2vec} & \xmark & $\mathcal{O}(N)$ & \xmark & \xmark & \xmark\\
\hline
GIN \cite{xu2018powerful}, DiffPool \cite{NIPS2018_7729}, MinCutPool \cite{bianchi2020spectral} & \cmark & $\mathcal{O}(N)$ & $\mathcal{O}(1)$ & \cmark & \cmark\\
%\hline
%Continuous Graph2Vec (Sec.~\ref{s:evalution}) & \cmark & $\mathcal{O}(N)$ & \xmark & \cmark & \cmark\\
\hline
\textbf{HG2V} (Sec.~\ref{s:contribution}), Infograph \cite{Sun2020InfoGraph} &\cmark & $\mathcal{O}(N)$ & $\mathcal{O}(1)$ & \cmark & \xmark\\
\hline
\end{tabular}
\begin{tablenotes}
\item[*] can be improved with Nystrom approximation or Random Fourier Features.
\end{tablenotes}
\end{threeparttable}
\smallskip
\caption{Key properties of  methods (related or proposed) for graph embedding. $N$ is the number of graphs. Symbol \xmark\text{  }for \textbf{Complexity} (inference) means the method is transductive (and not inductive) and one needs to use the same time as for training. Symbol \cmark\text{  } for \textbf{Supervised} means labels are required to learn a representation (by back-propagating classification loss). %\vspace*{-8mm}
}
\label{tab:advantages}
\end{table}

\section{Definitions and background}
\label{s:definition-background}

The proposed method is inspired by Graph2Vec \cite{narayanan2017graph2vec}, which was itself built upon Negative Sampling and the Weisfeiler-Lehman (\textbf{WL}) algorithm \cite{Weisfeiler1968ReductionOA}. The present section recalls the fundamental ideas of those algorithms. The Weisfeiler-Lehman method produces efficient descriptors of the topology of the graph, which allows to create embeddings of good quality through the maximization of Mutual Information.

\begin{Definition}
An attributed graph is a tuple $(V,A,Z)$, where $V$ is the set of nodes, $A\in\mathbb{R}^{|V|\times |V|}$ is a weighted adjacency matrix, and $Z:V\rightarrow\mathbb{R}^n$ is the function that maps each node $u\in V$ to its attribute vector $Z(u)$. Let $\mathbb{G}$ be the space of attributed graphs.
\end{Definition}

\begin{Definition}
A graph embedding is a function $\mathcal{E}:\mathbb{G}\rightarrow\mathbb{R}^d$ that maps each attributed graph to a vector in the latent space $\mathbb{R}^d$ for some non-negative integer $d$.
\end{Definition}

\subsection{Weisfeiler-Lehman procedure (WL)} The seminal paper \cite{Weisfeiler1968ReductionOA} proposes an algorithm initially created in an attempt to solve graph isomorphism problem (whether or not Graph Isomorphism problem belongs to P is still an open problem). It maps the original graph, with discrete node labels, onto a sequence of labelled graphs, by repeatedly applying the same deterministic operator, as sketched in Fig.~\ref{fig:weisfeilerlehman}. The sequence of node representations generated at each iteration can be used to distinguish between two graphs\footnote{Although it exists different graphs producing the same sequence.}. The  method can be used to build efficient kernels for graph learning \cite{shervashidze2011weisfeiler}. In the following, we will use the \WL-Optimal Assignment kernel \cite{kriege2016valid} as state-of-the-art. The procedure to generate the labels is the following:  
\begin{align} 
x^0(u)&=Z(u)\\
x^{l+1}(u)&=\text{hash}\Big(\big\{x^l(v) | v\in\mathcal{N}(u)\cup\{u\}\big\}\Big)
\label{eq:WLhashing}
\end{align}
where $\mathcal{N}(u)$ is the set of neighbours of $u$.

The hashing function has to be injective in order to distinguish between different rooted subtrees. The notation $\{\}$ emphasises the fact that the output only depends on the unordered set of labels (``bag of labels''). The procedure is iterated on $l$ up to a (user defined) depth $L$: the deeper, the wider the neighborhood used to distinguish between graphs. 

By definition, the label $x^l(u)$ of a node $u$ at the $l$-th level depends only on the labels of the nodes at distance at most $l$ from $u$. The output is invariant under node permutation, and hence is the same for isomorphic graphs. If graph $g_i$ contains $N_i$ nodes, then it produces at most $N_i$ new labels per level, for a maximal number of $N_iL$ new labels at the end.

\begin{figure}[b]
    \centering
    \includegraphics[width=13cm,height=29cm,keepaspectratio]{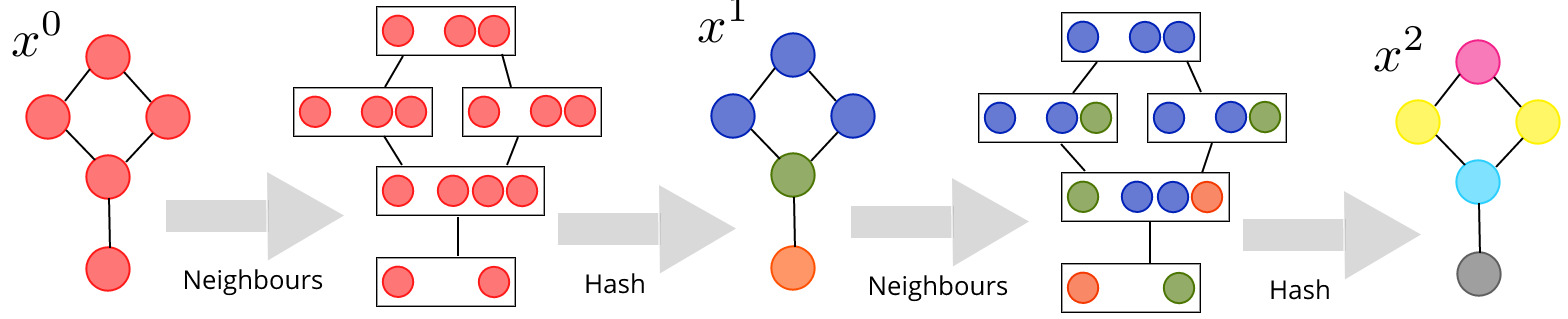}
    \caption{Two iterations of the \WL\text{ }algorithm, starting with uniform labels.}
    \label{fig:weisfeilerlehman}
\end{figure}

\subsection{Negative Sampling and Mutual Information } Many problems in machine learning amount to learn a distribution $P_{Y|X}$ given training examples $(x,y)\sim P_{X,Y}$. It usually requires the usage of a \textit{Softmax} function, implying the expensive computation of a normalization constant: this is unfeasible when the domain of $Y$ is too big (for example in the context of natural language processing, given the size of vocabulary). Hence, Word2Vec \cite{word2vec} replaces it by a sequence of binary classification problems, where a discriminator $T$ learns to distinguish between real and fake pairs. Instead of modeling the conditional distribution $P_{Y|X}$, the binary classifier learns to distinguish between examples coming from the joint distribution $P_{X,Y}$ and coming from the product distribution $P_X\otimes P_Y$.
Negative Sampling can be used to build the Jensen-Shannon estimator of \textbf{Mutual Information (MI)} between $X$ and $Y$ (see \cite{hjelm2018learning} for the first usage of this estimator, see \cite{NIPS2016_6066} for its construction and  \cite{melamud-goldberger-2017-information} for other insights):
\begin{equation}
\label{eq:cross_entropy}
    \widehat{\mathcal{I}}^{\text{(JSD)}}(X,Y)=-\mathbb{E}_{(x,y)\sim P_{XY}}\log{\sigma{(T_{\theta}(x,y)})}-\mathbb{E}_{(x,y)\sim P_X\otimes P_Y}\log{\sigma{(-T_{\theta}(x,y))}}
\end{equation}
where $T_{\theta}:\mathcal{X}\times\mathcal{Y}\rightarrow\mathbb{R}$ is the discriminator, i.e. a function parameterized by $\theta$ (whose exact form is specified by the user, it can be a neural network), and typically $\sigma(x)=\sfrac{1}{(1+e^{-x})}$. Maximizing this estimator of MI is equivalent to minimizing cross entropy between the prediction $\sigma(T(x,y))$ and the labels of the binary classification, with $P_{X,Y}$ (resp. $P_X\otimes P_Y$) is the distribution of class $1$ (resp. $2$).

\subsection{Graph2Vec}\cite{narayanan2017graph2vec} combines those two ideas to produce its graph embedding. The joint probability $P_{XY}=P_{X|Y}P_Y$ is constructed by sampling a graph $g$ from the dataset, and then by sampling a label $x$ from the sequence generated by \WL\ from this graph.
Minimizing the cross entropy with respect to $\theta$ leads to the following expression for the loss:  
\begin{align}
\mathcal{L}&=\mathbb{E}_{(x,g)\sim P_{XY}}\log{\sigma{(\theta_{g}\cdot\theta_{x})}}+\mathbb{E}_{(x,g)\sim P_X\otimes P_Y}\log{\sigma{(-\theta_{g}\cdot\theta_{x})}}
\end{align}
The discriminator function $T_{\theta}(x,y)=\theta_{y}\cdot\theta_{x}$ is taken as a dot product between a graph embedding $\theta_{g}\in\mathbb{R}^d$ and a \WL\text{ }label embedding $\theta_{x}\in\mathbb{R}^d$, which are vector of parameters randomly initialized and optimized with SGD. There is one such vector for each graph $g$ and each label $x$ produced by \WL\text{ }. The resulting graph embedding is $\mathcal{E}(g)=\theta_{g}$ while $\theta_{x}$ can be discarded. Optimizing this objective ensures to maximize the mutual information between the \WL\text{ }labels and the graph embeddings, which is a way to compress information about the distribution of \WL\ labels into the embedding.

\section{Contribution: Hierarchical Graph2Vec (HG2V)}
\label{s:contribution}
%We summarize our contributions as follows:
Our motivations for this work can be summarized as follows:
\begin{itemize}
    \item we first show that \WL\ fails to capture global scale information, which is hurtful for many tasks;
    \item we then show that such flaw can be corrected by the use of graph coarsening. In particular, Loukas' coarsening exhibits good properties in this regard;
    \item we finally show that the advantage of GNN over \WL\ is to be continuous functions in node features. They are robust to small perturbations.
\end{itemize}
Based on those observations, we propose a new algorithm building on graph coarsening and mutual information maximization, which we term \textbf{Hierarchical Graph2Vec} (or \textbf{HG2V}). It has the following properties:
\begin{itemize}
    \item The training is \textbf{unsupervised}. No label is required. The representation can be used for different tasks.
    \item The model is \textbf{inductive}, trained once for all with the graphs of the dataset in linear time. The training dataset is used as a prior to embed new graphs, whatever their underlying distribution.
    %coming from the same distribution or not.
    \item It handles \textbf{continuous nodes attributes} by replacing the hash function in \WL\ procedure by a Convolutional Graph NN. It can be combined with other learning layers, serving as pre-processing step for feature extraction.
    \item The model is \textbf{end-to-end differentiable}. Its input and its output can be connected to other deep neural networks to be used as building block in a full pipeline. The signal of the loss can be back-propagated through the model to train feature extractors, or to retrain the model in transfer learning. For example, if the node features are rich and complex (images, audio), a CNN can be connected to the input to improve the quality of representation.   
    \item The structures of the graph at all scales are summarized using Loukas coarsening. The embedding combines \textbf{local view and global view} of the graph. 
\end{itemize}
The resulting algorithm shares a similar spirit with Graph2Vec (MI maximization between node and graph descriptors), but it corrects some of its above-mentioned flaws. A high level overview of the method is provided in Algorithm~\ref{algo:HG2V_overview}. Table~\ref{tab:advantages} summarizes key properties of the method, against the other ones found in literature.  
  
In the following, we introduce  in Section~\ref{ssec:Loukas} the Loukas' coarsening method of \cite{loukas2018graph}, and detail how we use it in Section~\ref{ssec:hierarchyofneighborhoods}. Then, section~\ref{ssec:truncatedKrylov} deals with the continuity property achieved by GNN, while Section~\ref{ssec:HierarchicalNegSampling} explains how to train our proposed model \textbf{HG2V}.  

\begin{algorithm*}[t]
\caption{High-level version of the \textbf{HG2V} algorithm}\label{algo:HG2V_overview}
    \SetAlgoLined
    \KwResult{Graph embedding $\mathcal{E}(g)$ for each graph $g$}
    \KwIn{A training set of attributed graphs $g$, subset of $\mathbb{G}$ and the number of stages $L$ ; GNNs $F^l_{\theta}$ and $H^l_{\theta}$ with randomly initialized  $\theta$, $1\leq l\leq L$}
    \ForEach{batch of attributed graphs} {
        \ForEach{graph $g$ in the batch}{
             See Sec.~\ref{ssec:Loukas}: run Loukas' algorithm on $g$ to produce a sequence of
                   coarsened graphs $g^l$, $1\leq l\leq L$\;
        }
        \ForEach{level $1\leq l\leq L$}{
            \ForEach{graph $g^{l}$ in the batch}{
                \ForEach{node $u$ in $g^l$}{ See ~\ref{ssec:hierarchyofneighborhoods}:
                     Generate local neighborhood embedding $x^l(u)$ using $H^l_{\theta}$\;
                     Let $\mathcal{P}(u)\in g^{l+1}$ the image of $u$ after pooling a node of the coarsened graph\;
                     Generate node embedding $g^{l+1}(\mathcal{P}(u))$ using $F^l_{\theta}$\;
                     Create positive example $(x^l(u), g^{l+1}(\mathcal{P}(u))$\;
                }
            }
            \ForEach{pair of graphs $(g,g')$}{
                \ForEach{pair of nodes $(u,v)\in(g,\tilde{g})$}{
                     Create negative examples $(x^l(u), \tilde{g}^{l+1}(\mathcal{P}(v))$\;
                }
            }
              Minimize the cross entropy in Eq.~(\ref{eq:cross_entropy}) between positive and negative examples with discriminator $T(x,y)=x\cdot y$ and using Sec.~\ref{ssec:HierarchicalNegSampling}.
        }
    }
 %   \smallskip
\end{algorithm*}

\subsection{Loukas's Coarsening}
\label{ssec:Loukas}

In this section we detail the main drawback of \WL\text{ } procedure, and the benefit of graph coarsening to overcome this issue. For simplification, we will put aside the node attributes for a moment, and only focus on the graph structure. Even in this simplified setting, \WL\text{ } appears to be sensitive to structural noise.  

\subsubsection{Wesfeiler-Lehman Sensibility to Structural Noise}

The ability of \WL\text{ }to discriminate all graph patterns comes with the incapacity to recognize as similar a graph and its noisy counterpart. Each edge added or removed can strongly perturb the histogram of labels produced by \WL. Said otherwise, \WL\text{ } is not a good solution to inexact graph matching problem.
  
We perform experiments to evaluate the effect of adding or removing edges on different graphs. We randomly generate $100$ graphs of $500$ nodes each, that belong to four categories (cycle, tree, wheel and ladder), using the routines of NetworkX \cite{networkx} library. For each generated graph $g$, we randomly remove from $1$ to $10$ edges, sampled with independent and uniform probability, to create the graph $g'$. One may hope that such little modification over this huge edge set would not perturb excessively the labels of \WL\text{ } procedure.  
  
To evaluate the consequences of edge removal we propose to use as similarity score the \textit{intersection over union} of histogram of labels of $g$ and $g'$ at each stage $1\leq l\leq 5$:
\begin{equation}
\mathcal{S}^l(g,g')=100\times\frac{|\text{histo}(\WL^l{(g)})\cap\text{histo}(\WL^l{(g')})|}{|\text{histo}(\WL^l{(g)})\cup\text{histo}(\WL^l{(g')})|}
\end{equation}
The average similarity score $\tilde{\mathcal{S}}^l(g,g')$ over the $100$ graphs is reported in Figure \ref{fig:wlsensivity}.
  
The similarity decreases monotonically with the number of edges removed, even when restricting the procedure to the first stage (neighborhood of width $1$). On higher stages (wider neighborhood) the effect is even worse. On graphs with small diameter (such as \textit{wheel} graph or $3$-regular tree) a significant drop in similarity can be noticed. On ladder graph and cycle, sparse graphs with huge diameter, the effect of edge removal remains significant.  

\begin{figure}[t]
    \centering
     \begin{subfigure}[b]{0.48\linewidth}
    \includegraphics[width=1.\linewidth]{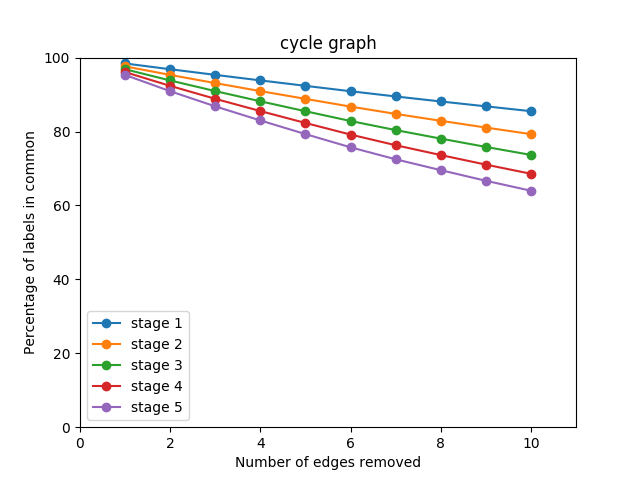} 
    \caption{Cycle: the $2$-regular graph with one giant component.}
    \vspace{1ex}
  \end{subfigure} 
  \begin{subfigure}[b]{0.48\linewidth}
    \includegraphics[width=1.\linewidth]{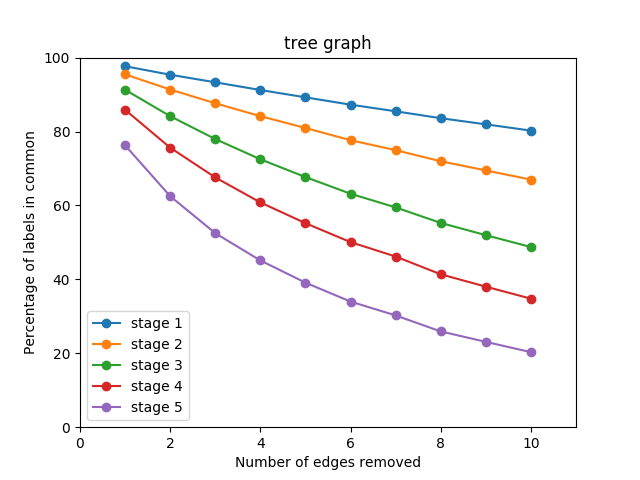} 
    \caption{Tree: $3$ children per node, except the ones at the last level.}
    \vspace{1ex}
  \end{subfigure} 
  \begin{subfigure}[b]{0.48\linewidth}
    \includegraphics[width=1.\linewidth]{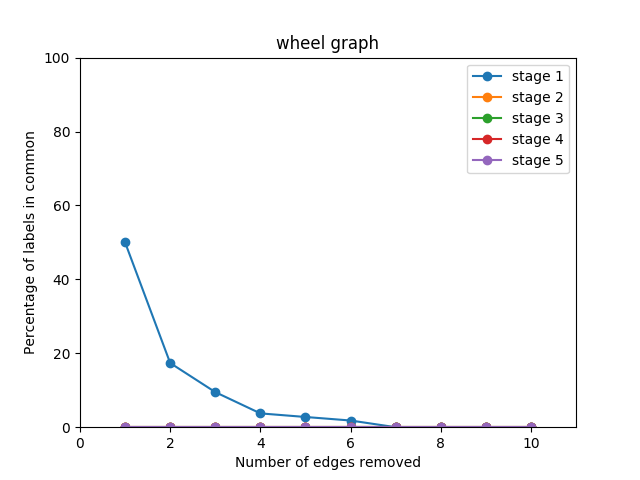} 
    \caption{Wheel: like cycle graph, with an additional node connected to all the others.}
    %\vspace{4ex}
  \end{subfigure}
  \hfill
  \begin{subfigure}[b]{0.48\linewidth}
    \includegraphics[width=1.\linewidth]{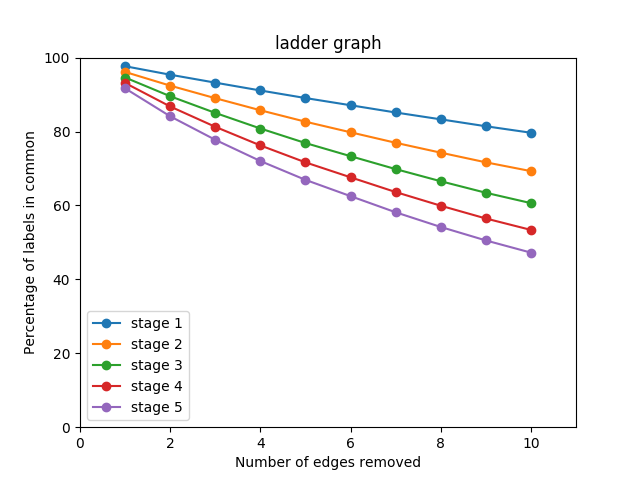} 
    \caption{Ladder: two paths of $250$ nodes each, where each pair of nodes is joined by an edge.}
    %\vspace{4ex}
  \end{subfigure} 
    \caption{Similarity score as a function of edges removed for different stages of \WL\text{ } iterations. The similarity score reaches $100\%$ for identical sets of labels, and $0\%$ for disjoint sets of labels.}
    \label{fig:wlsensivity}
\end{figure}
  
\subsubsection{Robustness to Structural Noise with Loukas's Coarsening}  
\WL\ procedure characterizes a graph as a sequence of rooted subtrees with increasing width. While this description is suitable for small patterns and inexact graph matching at local scale, it is very sensitive to structural noise. Adding or removing few edges (without hurting the global shape) changes the labels completely for subtrees of higher width. So as to characterize global features (e.g. communities, bridges, sparsest cuts...), we rely on a sequence of coarsened graphs based on the Loukas' procedure, that replaces the different iterations of neighborhoods of \WL.

\begin{Definition}[Graph Coarsening]
Graph coarsening is an operation mapping a graph $(V_1,E_1)$ to a new graph $(V_2,E_2)$ verifying $|V_2|<|V_1|$, using a \textbf{surjective graph homomorphism} $\mathcal{P}:V_1\rightarrow V_2$ called \textbf{pooling function}:
\begin{equation}
\label{eq:pooling}
\begin{split}
    (u,v)\in E_1&\implies (\mathcal{P}(u),\mathcal{P}(v))\in E_2\text{ or }\mathcal{P}(u)=\mathcal{P}(v)\\
    (\mathcal{P}(u),\mathcal{P}(v))\notin E_2&\implies (u,v)\notin E_1
\end{split}
\end{equation}
\end{Definition}

Loukas's method \cite{loukas2018graph} is a graph coarsening operation. This spectral reduction of attributed graphs offers the guarantee, in the spectral domain, that a coarsened graph approximates well its larger version. The idea is that the graph spectrum (eigenspaces and eigenvalues of their Laplacian) describes global and local structures of graphs. Structural noise may have very little consequences on the spectrum, depending of the nature of the intervention. Hence Loukas' coarsening, which preserves components associated with a low frequency in the graph spectrum, will produce a smaller graph with the same \say{global shape} as the input one, as demonstrated in Figure \ref{fig:coarsen_mnist}.  
  
In a nutshell, the method computes a sequence of projection matrices $(P_l)_{1\leq l\leq L}$ such that the Laplacian of the coarsened graph can be written as $L_{l+1}=P_l^{\mp}L_{l}P_l$ (where $\mp$ denotes the transposed pseudo-inverse). Since the algorithm is quite evolved and its explanation beyond the scope of our work, we refer to the original paper \cite{loukas2018graph} for an explicit description of Loukas' method and an extensive list of properties and guarantees.

\begin{figure}[b]
    \centering
    \includegraphics[width=11cm,height=29cm,keepaspectratio]{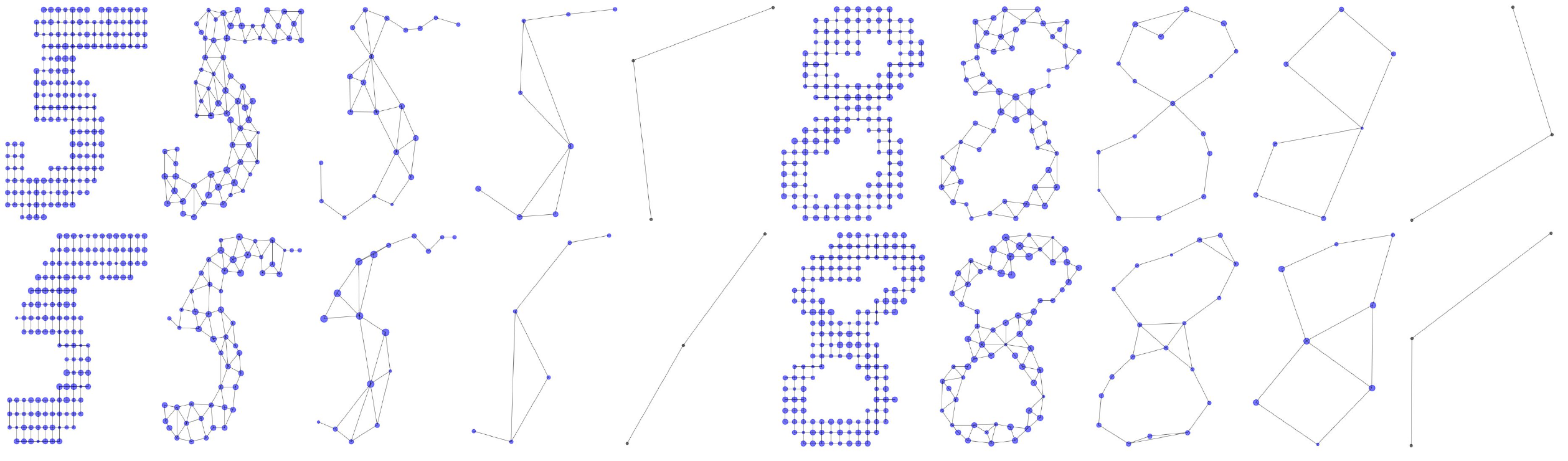}
    \caption{Coarsening of four graphs built from MNIST digits using Loukas' algorithm. The bigger the node, the wider the neighborhood pooled. Similar digits share similar shapes. }
    \label{fig:coarsen_mnist}
\end{figure}
\begin{figure}[t]
    \centering
     \begin{subfigure}[b]{0.48\linewidth}
    \includegraphics[width=1.\linewidth]{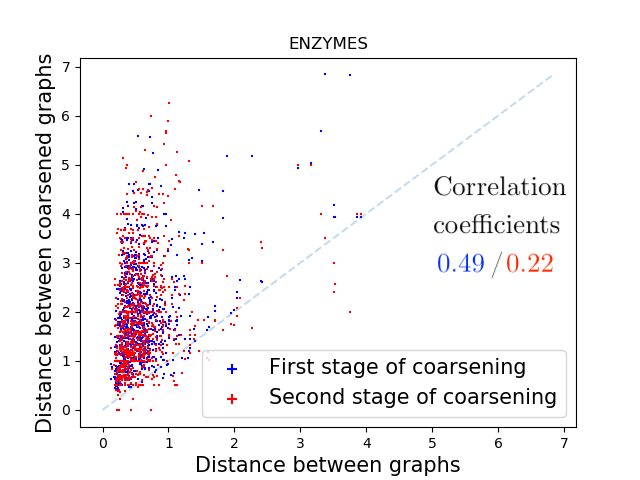} 

    %\vspace{1ex}
  \end{subfigure} 
  \begin{subfigure}[b]{0.48\linewidth}
    \includegraphics[width=1.\linewidth]{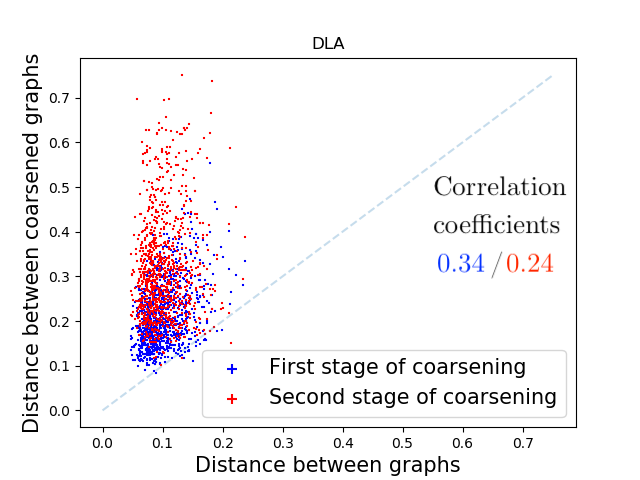} 

    %\vspace{1ex}
  \end{subfigure} 
  \begin{subfigure}[b]{0.48\linewidth}
    \includegraphics[width=1.\linewidth]{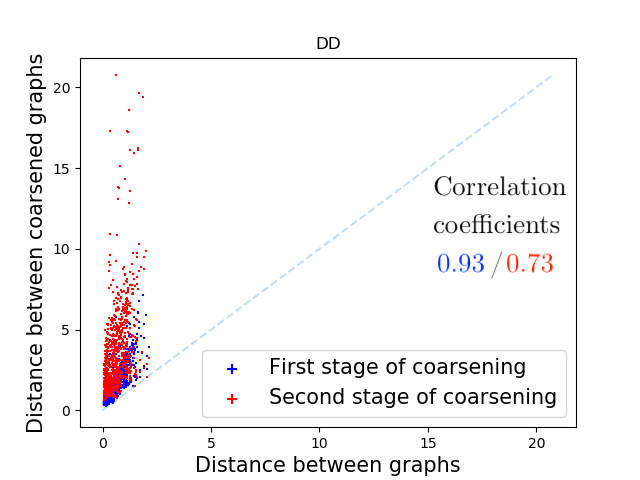} 

    %\vspace{4ex}
  \end{subfigure}
  \hfill
  \begin{subfigure}[b]{0.48\linewidth}
    \includegraphics[width=1.\linewidth]{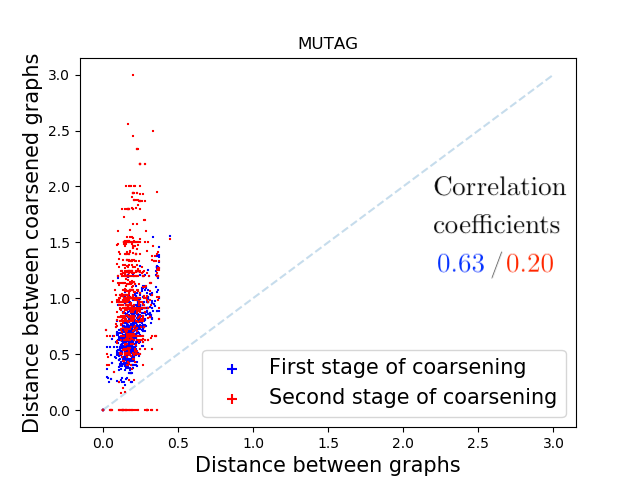} 

    %\vspace{4ex}
    
  \end{subfigure} 
    \caption{Wasserstein distance between spectra of graphs $g^0$ and $h^{0}$ sampled from different datasets (see Section~\ref{s:evalution}  for their description) compared to the same distance between their coarsened graphs $g^{1}$ and $h^{1}$ and their two times coarsened graphs $g^{2}$ and $h^{2}$ . In blue (resp. red) are given the correlation coefficients of the distance between $g^0$ \& $h^{0}$ and $g^1$ \& $h^{1}$ (resp. $g^2$ \& $h^{2}$). These coefficients have been computed after averaging 10 runs where we sampled 1000 graphs couples ($g^{0}$, $h^{0}$) for each dataset. As expected the more we coarsened graph the more the correlation coefficient decreases because coarsening always lost some  structural informations.  }
    \label{fig:distcoars}
\end{figure}
The interest of this coarsening method is that, if two graphs $g^{l-1}$ and $h^{l-1}$ are close enough, the coarsened graphs $g^l$ is itself a satisfying coarsening for $h^{l-1}$. By symmetry, the same result follows for $h^l$ and $g^{l-1}$. Hence, one may hope that $g^l$ and $h^l$ share similar patterns (Figure \ref{fig:distcoars}), and it will be advantageous for the \WL\ procedure.  
  
Such intuition can be confirmed experimentally. On four datasets (see Section~\ref{s:evalution} for their description), we sample two graphs $g^0$ and $h^0$ with independent and uniform probability.  
  
We measure their similarity using the Wasserstein distance between their spectra. Such choice is motivated by the fact that two graphs with different number of nodes have spectrum of different sizes. Wasserstein provides an elegant way to measure distance between sets of different size, in addition to be fast and easy to compute. 
  
\begin{Definition}[Wasserstein Distance between Graph Spectra]
Let $\bm{\lambda}=\{\lambda_1,\lambda_2,\ldots,\lambda_n\}$ and $\bm{\mu}=\{\mu_1,\mu_2,\ldots,\mu_m\}$ the spectra of two graphs. The Wasserstein distance is defined as:
\begin{equation}
    d_{\mathcal{W}}(\bm{\lambda,\mu})=\min_{\pi\in\Pi}\sum_{i,j}\pi(i,j)|\lambda_i-\mu_j|
\end{equation}
where $\Pi$ is the collection of measures over $\bm{\lambda}\times\bm{\mu}$ whose marginals are discrete uniform distributions over $\bm{\lambda}$ and $\bm{\mu}$ respectively.  
\end{Definition}
  
Since Loukas's coarsening preserves spectrum, we expect the distance between $g^0$ and $h^0$ to be correlated with the distance between their coarsened counterpart $g^1$ and $h^1$. Each dot in Figure~\ref{fig:distcoars} corresponds to an experiment, which are repeated $1000$ times. Interestingly, this correlation strongly depends of the underlying dataset.  
  
\subsection{Hierarchy of neighborhoods}
\label{ssec:hierarchyofneighborhoods}
%It induces a hierarchy of neighborhoods on which to apply \WL. Remember that \WL\ creates node embedding $x^l(u)$ that only depends of its neighborhood of radius $r$: the nodes at distance at most $r$ of $u$. Let us now consider $\mathcal{N}^r(u)$ the attributed subgraph induced by those nodes. We notice that $\mathcal{N}^r(u)\subset\mathcal{N}^{r+1}(u)$. The node embeddings are naturally organized into a hierarchy: each node $u$ and level $r$ characterizes a special neighborhood. The whole graph itself is just a very large neighborhood: for $g\in\mathbb{G}$ of diameter $D$ we have $g=\mathcal{N}^D(u)$.

Taking advantage of the previous observation, we propose to build a hierarchy of coarsened graphs $g_i^l$ using Loukas' method. It induces a hierarchy of nested neighborhoods $u, \mathcal{P}(u), \mathcal{P}(\mathcal{P}(u)), \dots, \mathcal{P}^L(u)$ by pooling the nodes at each level.  
  
We learn the \textbf{node embedding} $g^l(u)$ (of node $u$) at each level. This node embedding is used to produce a \textbf{local neighborhood embedding} $x^l(u)$ using function $H^l_{\theta}$, and to produce the node embedding of the next level $g^{l+1}(\mathcal{P}(u))$ using function $F^l_{\theta}$. Formally, the recurrent equations defining the successive embeddings are: 
\begin{align}
g^0(u)&=Z(u)\\
x^l(u)&=H^l_{\theta}(\{g^l(v) | v\in\mathcal{N}(u)\cup\{u\}\})\\
g^{l+1}(\mathcal{P}(u))&=F^l_{\theta}(\{g^l(v) | v\in\mathcal{N}(u)\cup\{u\}\})
\end{align}
The procedure is illustrated in Figure~\ref{fig:hierarchical}. In practice, functions $H^l_{\theta}$ and $F^l_{\theta}$ are graph neural networks parametrized by $\theta$, and whose exact form will be specified in next Section. 
\begin{figure}[t]
    \centering
    \includegraphics[width=11cm,height=27cm,keepaspectratio]{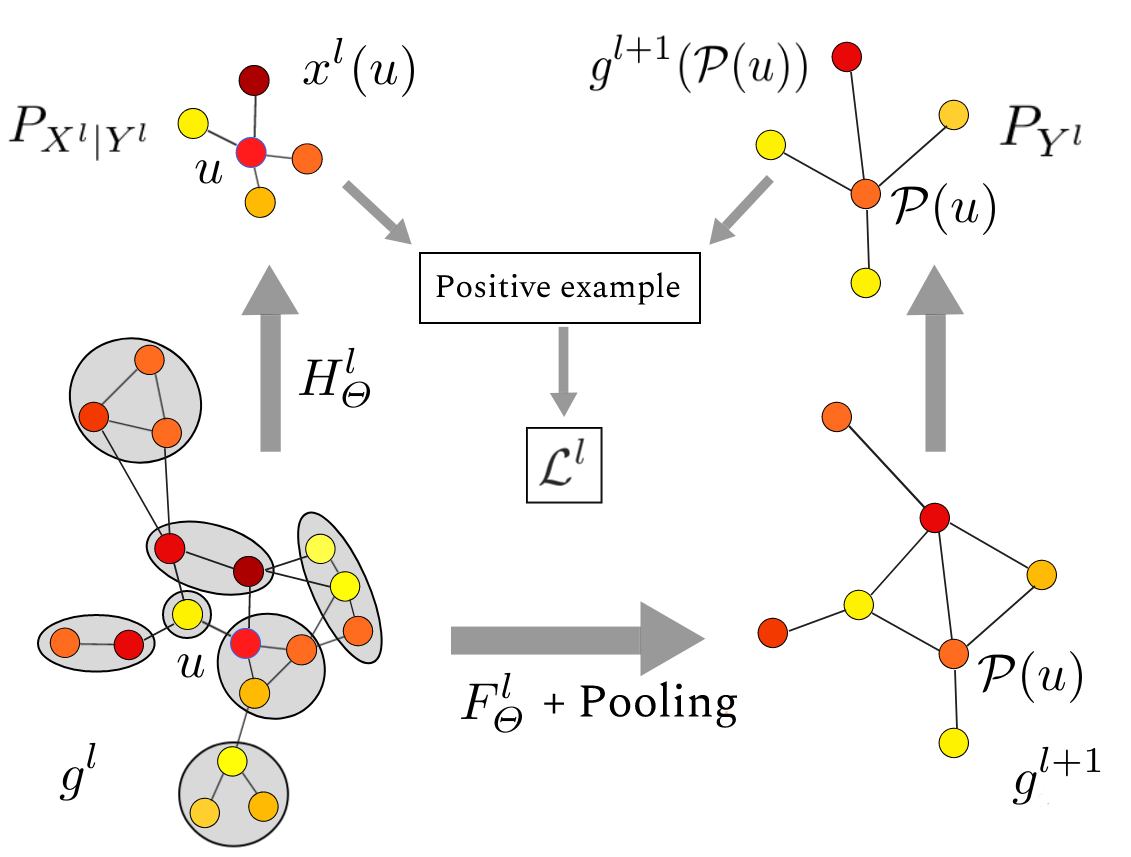}
    \caption{Single level of the pyramid. Local information $x^l(u)$ centered around node $u$ is extracted from graph $g^l$. The graph is coarsened to form a new graph $g^{l+1}$. There, $g^{l+1}(\mathcal{P}(u))$ captures information at a larger scale, centered on node $\mathcal{P}(u)$. The pair $(x^l(u),g^{l+1}(\mathcal{P}(u)))$ is used as positive example in the negative sampling algorithm, and it helps to maximize mutual information between global and local view.}
    \label{fig:hierarchical}
\end{figure}

\subsection{Handling continuous node attributes with Truncated Krylov} \label{ssec:truncatedKrylov}

The \WL\ algorithm uses a discrete hash function in Eq.~(\ref{eq:WLhashing}), with the issue that nodes sharing similar but not quite identical neighborhoods are considered different. If the differences are caused by noise in computations or measures, they should not result in much differences in the labels. For that, we relax the injectivity property of \WL\ by replacing it by a function with a continuity property. 

It is known that Graph Neural Networks (GNN) have a discriminative power at least equivalent to \WL\ \cite{maron2019provably,morris2018weisfeiler,xu2018powerful}. We require the opposite, and we emphasize the importance of not having a too strong discriminator. We use the extension of the Gromov-Wasserstein distance to attributed graphs, that requires the mapping to preserve both edge weights and node attributes. The resulting distance is a special case of the Fused Gromov-Wasserstein distance \cite{vayer:hal-02174322}.  
  
\begin{Definition}[Fused Gromov-Wasserstein distance.]
Let $g_1=(V_1,A_1,Z_1)$ and $g_2=(V_2,A_2,Z_2)$ be two attributed graphs in $\mathbb{G}$.
\begin{equation} \label{eq:gromov_wasserstein}
    d_\mathbb{G}(g_1,g_2)=\min_{\pi\in\Pi}\sum_{u,u',v,v'}\pi(u,v)\pi(u',v')\big(|A_1(u,u')-A_2(v,v')|+|Z_1(u)-Z_2(v)|+|Z_1(u')-Z_2(v')|\big)
\end{equation}
where $\Pi$ is the collection of measures over $V_1\times V_2$ whose marginals are discrete uniform distributions over $V_1$ and $V_2$.   
\end{Definition}
This definition allows us to state the continuity of GNN under this metric.  
\begin{Lemma}[GNN continuity.]
GNNs with continuous activation function are continuous on the topological space induced by the metric $d_\mathbb{G}$.\label{theorem:continuity}
\end{Lemma}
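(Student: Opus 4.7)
The plan is to establish sequential continuity: for any sequence $g_n=(V_n,A_n,Z_n)\in\mathbb{G}$ with $d_\mathbb{G}(g_n,g)\to 0$, I show that the GNN outputs converge in $\mathbb{R}^d$. For each $n$ I fix an optimal coupling $\pi_n^\star\in\Pi$ that realizes the minimum in Equation~(\ref{eq:gromov_wasserstein}); by assumption, the total transport cost carried by $\pi_n^\star$ then vanishes as $n\to\infty$. The argument proceeds by induction on the depth $\ell$ of the GNN, tracking the transport-weighted discrepancy
\begin{equation}
\Delta_\ell^{(n)} \defequal \sum_{u,v} \pi_n^\star(u,v)\,\bigl\| h^\ell_n(u) - h^\ell(v) \bigr\|,
\end{equation}
where $h^\ell_n$ and $h^\ell$ are the node embeddings at layer $\ell$ produced by the GNN on $g_n$ and $g$ respectively, and showing $\Delta_\ell^{(n)}\to 0$ at every depth.

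The base case $\ell=0$ is immediate since $h^0_n(u)=Z_n(u)$ and $h^0(v)=Z(v)$, so $\Delta_0^{(n)}$ is exactly the attribute-matching term of the FGW cost. For the inductive step, write a generic layer as $h^{\ell+1}(u)=\phi\bigl(W_1\,h^\ell(u)+W_2\sum_{u'}A(u,u')\,h^\ell(u')\bigr)$ with $\phi$ continuous. Expanding the neighborhood difference against the product coupling $\pi_n^\star(u,v)\pi_n^\star(u',v')$ and splitting with a triangle inequality of the form
\begin{equation}
\bigl\|A_n(u,u')h^\ell_n(u')-A(v,v')h^\ell(v')\bigr\| \le |A_n(u,u')|\,\|h^\ell_n(u')-h^\ell(v')\| + |A_n(u,u')-A(v,v')|\,\|h^\ell(v')\|
\end{equation}
produces a feature-matching part controlled by $\Delta_\ell^{(n)}$ (vanishing by induction) and a structural part controlled by the Gromov cost of $\pi_n^\star$ (vanishing by assumption). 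On any precompact neighborhood of $g$ in $(\mathbb{G},d_\mathbb{G})$, attributes and edge weights are bounded and the iterates $h^\ell_n(u)$ remain in a common compact set, so the modulus of continuity $\omega_\phi$ of $\phi$ on that set is well-defined with $\omega_\phi(\delta)\to 0$ as $\delta\to 0$, giving $\Delta_{\ell+1}^{(n)}\to 0$. The final graph embedding then comes from a continuous permutation-invariant readout, e.g.\ $\mathcal{E}(g)=\psi\bigl(\tfrac{1}{|V|}\sum_u h^L(u)\bigr)$; since $\pi_n^\star$ has uniform marginals, the difference of the two averages is bounded by $\Delta_L^{(n)}$, and continuity of $\psi$ delivers $\mathcal{E}(g_n)\to\mathcal{E}(g)$, completing the argument.

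The main obstacle is the inductive step, where the linear aggregation $\sum_{u'}A(u,u')\,h^\ell(u')$ must be related to the product coupling $\pi_n^\star(u,v)\pi_n^\star(u',v')$ appearing in the FGW cost. Rewriting the sum via the uniform marginal $\sum_{v'}\pi_n^\star(u',v')=1/|V_n|$ introduces size-dependent scaling factors of $|V_n|$ and $|V|$ that only cancel cleanly when the aggregation is size-normalized (as in GCN-style layers); otherwise one has to restrict to a subfamily of graphs of comparable size, or rely on a normalised readout to absorb the factor. A secondary subtlety is that continuity of $\phi$ alone only upgrades to uniform continuity on compact sets, so one has to establish upfront that all layer iterates along the converging sequence lie in a common compact subset of $\mathbb{R}^d$; this follows from the boundedness of $A$ and $Z$ on a closed $d_\mathbb{G}$-ball around $g$, combined with continuity of the affine maps $W_1,W_2$ and of $\phi$ applied finitely many times.
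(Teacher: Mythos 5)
Your proposal follows the same basic strategy as the paper's own proof --- sequential continuity, a transport plan $\pi_n$ carried over from the inputs to the outputs, and an induction over layers --- but it is substantially more explicit exactly where the paper is only assertive. The appendix proof bounds $d_\mathbb{G}(g'_n,g')$ by reusing the input coupling (so the adjacency term is unchanged and vanishes by hypothesis) and then simply declares that ``the continuity of $S_t$'' forces the transport-weighted feature term to vanish; your discrepancy $\Delta_\ell^{(n)}$, the triangle-inequality split into a feature part controlled by $\Delta_\ell^{(n)}$ and a structural part controlled by the Gromov cost, and the modulus-of-continuity argument on a common compact set are precisely the missing content of that one sentence. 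More importantly, the obstacle you flag in your last paragraph is not a technicality you failed to dispatch --- it is a genuine defect of the lemma as stated, silently present in the paper's proof as well. For the unnormalized sum aggregation $m^{t+1}(u)=\sum_{v\in\mathcal{N}(u)}M_t(\cdot)$ used in the paper's MPNN framework, the statement is false: a graph $g$ and its $k$-fold uniform blow-up (duplicate every node $k$ times, inherit adjacency and attributes) satisfy $d_\mathbb{G}(g,\tilde{g})=0$ under the uniform coupling, yet each blown-up node receives $k$ times as many identical messages, so the layer outputs differ and $d_\mathbb{G}(F(g),F(\tilde{g}))>0$; no map with that behaviour can be continuous for a pseudometric that identifies $g$ with $\tilde{g}$. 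Your observation that the $|V_n|$-versus-$|V|$ scaling factors coming from the uniform marginals only cancel for size- or degree-normalized (mean-style) aggregation is therefore the correct diagnosis, and under that additional hypothesis your argument is a complete and strictly more rigorous version of the paper's. The honest conclusion is that the lemma needs normalized aggregation (or a restriction to graphs of fixed size) added to its hypotheses; with that amendment your proof closes, whereas the paper's does not engage with the issue at all.
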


GNNs are usually parameterized functions learnable with stochastic gradient descent, and hence fulfill this continuity constraint.  Moreover, some attributes may be irrelevant, and should be discarded, or sometimes the graph is only characterized by some function (for example a linear combination) of the attributes. Such a function can be inferred directly from the dataset itself, simply by looking by the co-occurrences of some similar sub-graphs. Hence, the ability of GNN to learn features relevant for some specific task is an other advantage over \WL.  

GCN \cite{kipf2017semi} is as a baseline among this family of networks (see Section~\ref{s:related-work} for a brief survey). Unfortunately, \cite{luan2019break} have shown that they behave badly when stacked. They propose Truncated Krylov GNN in replacement of GCN, correcting its flaws, with better theoretical guarantees and empirical evidence of their superiority over GCN. We chose it, encouraged by the promising results of early experiments. It amounts to consider, for a given node, a receptive field extended to the neighborhood of nodes at most at given distance $a$. The resulting layer uses the normalized adjacency matrix $\tilde{A}=D^{-\sfrac{1}{2}}(A+I)D^{-\sfrac{1}{2}}$ (like in GCN) of the graph $g^l$:
\begin{equation}
\label{eq:TruncatedKruylov}
\begin{split}
g^{l+1}&=\text{Pool}(\tanh{([g^l,\tilde{A}g^l,\tilde{A}^2g^l,\dots,\tilde{A}^ag^l]\theta^l_1)})\theta^l_2+\theta^l_3 \\
x^l&=\tanh{([g^l,\tilde{A}g^l,\tilde{A}^2g^l,\dots,\tilde{A}^ag^l]\theta^l_4)}.
\end{split}
\end{equation}
The vector $\theta$ is made of the trainable parameters, and $[\ldots]$ denotes the concatenation operation. We abused the notations, using $x$ and $g$ as real-valued vectors indexed by node set $V$. In Eq.~(\ref{eq:TruncatedKruylov}), one implements a specific graph filter of order $a$, close to the ones considered by Chebyshev polynomial approximation in \cite{defferrard2016convolutional}.\\
\textbf{Pool} denotes the merging of nodes. Their features are summed because it preserves information on multiset, as argued in \cite{xu2018powerful}.  
  
Note that $F^l_{\theta}$ and $H^l_{\theta}$ span the same neighborhood (up to distance $a$). But the usage of coarsening constraints $F^l_{\theta}$ to summarize the information content in the group of pooled nodes. It is this last point that makes $g^{l+1}$ a global descriptor. The affine transformation $X\mapsto X\theta^l_2+\theta^l_3$ in~(\ref{eq:TruncatedKruylov}) is used to improve the expressiveness of the model thanks to a bias term. Indeed, $g^{l}$ must both summarize the current level $l$ an be used as input for the next one, while $x^l$ only aims to summarize the neighborhood.
  
\subsection{Hierarchical Negative Sampling}  \label{ssec:HierarchicalNegSampling}

Following the principles of Graph2Vec we aim to \textbf{maximize mutual information} between \textbf{node embeddings} and \textbf{local neighborhood embeddings} at each level. We achieve this by training a classifier to distinguish between positive and negative examples of such pairs of embeddings. Positive pairs are defined as node $v$ in level $l$ and its ``super node'' $\mathcal{P}(v)$ in level $l+1$.  
  
More precisely, consider the $l$-th level of the hierarchical pyramid. The probability $P_{Y^l}$ is built by sampling a graph $g$ from the dataset, then by sampling a node $u$ with uniform probability from this graph. $P_{X^l|Y^l}$ is obtained by sampling a node from $\mathcal{P}^{-1}(u)$. It gives a pair $(x^l(\mathcal{P}^{-1}(u)), g^{l+1}(u))$ sampled according to $P_{X^lY^l}=P_{Y^l}P_{X^l|Y^l}$. The negative pairs are built as usual from the independent probability $P_{X^l}\otimes P_{Y^l}$. The corresponding loss function takes the form:
\begin{equation}
\mathcal{L}^l=\mathbb{E}_{(x^l_j(u),g^l(\mathcal{P}(u)))\sim P_{X^lY^l}}\log{\sigma{(x^l(u)\cdot g^l(\mathcal{P}(u)))}}+\mathbb{E}_{(x^l(u),g^l(v))\sim P_{X^l}\otimes P_{Y^l}}\log{\sigma{(-x^l(u)\cdot g^l(v))}}
\end{equation}
  
The overall method is described in Algorithm~\ref{algo:HG2V_overview}.
  
A descriptor $\mathcal{E}_l(g)$ for each level $l$ of the graph $g$ is computed by global pooling over node embeddings. The final graph embedding is obtained by concatenating those descriptors. The sum aggregator is always preferred over mean aggregator, because of its better ability to distinguish between multisets instead of distributions, as argued in \cite{xu2018powerful}. Hence, the learned representation is:
\begin{align}
\mathcal{E}_l(g) &= \left[\sum_{u\in V}g^l(u),\max_{u\in V}g^l(u) \right]\\
\mathcal{E}(g) &=\text{CONCATENATE}\left(\mathcal{E}_1(g), \mathcal{E}_2(g), ... , \mathcal{E}_L(g)\right)
\end{align}

In the Loukas' method, the number of stages is not pre-determined in advance: only the final coarsened graph size can be chosen. When the number of stages produced by the Loukas method is not aligned with the depth of the neural network, the pooling is either cropped (too much stages) or completed with identity poolings (not enough stages). The resulting vector can be used for visualization purposes (PCA, t-SNE) or directly plug into a classifier with labels on graphs.

\subsubsection{Complexity}

\noindent \textbf{Time.} The coarsening of each graph can be pre-computed once for all, in linear time in the number of nodes and edges \cite{loukas2018graph}. Hence, the main bottleneck is the maximization of mutual information. Let $|V|$ the maximum number of nodes in a graph. Let $B$ the batch size. Let $L$ the number of stages. Let $a$ the order of Truncated Krylov. Let $d$ the dimension of node embedding. The complexity of the algorithm \textit{per batch} is $\mathcal{O}(BLa|V|^3+BLa|V|^2d+BLa|V|d^2+B^2Ld)$. The first term is due to exponentiation of adjacency matrix, the second one to diffusion along edges, the third one to forward pass in network layers, and the last one to the Cartesian product to create positive and negative examples. The average complexity \textit{per graph} is hence: $\mathcal{O}(La|V|(|V|+d)^2+BLd)$. The most sensitive factor is the number of nodes, followed by the the number of features and the batch size. The magnitude of those matrices allows to handle graphs with hundred of nodes efficiently on modern GPUs, with embedding as big as $512$ and up to $8$ graphs per batch and $5$ stages of coarsening. In practice, the bottleneck turn out to be the pre-computation of coarsening, which do not benefit of GPU speed up.  

\noindent \textbf{Space.} Note that for datasets with small graphs we have $\text{dim}(\mathcal{E}_i)=2Ld>\text{dim}(X_i)+\text{dim}(A_i)=n|V|+|V|^2$. However, when the number of nodes exceeds $50$, the embedding size is always smaller than the adjacency matrix. Hence, this method is more suitable for big graphs with hundreds of nodes.

\section{Evaluation}
\label{s:evalution}

The code for the proposed method and to reproduce the experiments can be found on: \url{https://github.com/Algue-Rythme/GAT-Skip-Gram}.\\
%The code of the experiments can be found on: \url{https://github.com/anonymized}.\\
An experimental evaluation is conducted on various attributed graphs datasets, and the quality of the embeddings is assessed on supervised classification tasks in Section~\ref{s:supervised}. Our method is inductive: the model can be trained over a dataset and be used to embed graphs coming from another dataset. This property is analysed in Section~\ref{s:inductive}.
\textbf{HG2V} differs from Graph2Vec by the usage of GNN and Loukas coarsening. The influence of those two elements is analysed with ablatives studies in Section~\ref{s:ablative}.

\begin{table*}[t]
%\centering
%\raggedright
\hspace*{-1.cm}
\scalebox{0.8}{
\begin{tabular}{|M{2cm}M{0.8cm}M{0.8cm}|M{1.8cm}M{1.8cm}M{1.8cm}M{1.8cm}M{1.8cm}M{1.8cm}|M{1.8cm}M{1.8cm}|}
\hline
\textbf{DATASET} & \#graphs & \#nodes & \textbf{HG2V} (ours) & \textbf{Graph2Vec}& \textbf{Infograph} & \textbf{DiffPool} (supervised) & \textbf{GIN} (supervised)& \textbf{MinCutPool} (supervised) & \textbf{WL-OA} (kernel) & \textbf{WWL} (kernel)\\
\hline
\hline
IMDB-m & 1500 & 13 & $47.9\pm1.0$ & $\bm{50.4\pm0.9}$ & $49.6\pm0.5$ & $45.6\pm3.4$ & $48.5\pm3.3$ & \xmark & \xmark & \xmark\\
\hline
PTC\_FR & 351 & 15 & $\bm{67.5\pm0.5}$ & $60.2\pm6.9$ & \xmark & \xmark & \xmark & \xmark & $\bm{63.6\pm1.5}$ & \xmark\\
\hline
FRANK. & 4337 & 17 & \bm{$65.3\pm0.7$} & $60.4\pm1.3$ & \xmark & \xmark & \xmark & \xmark & \xmark & \xmark\\
\hline
MUTAG & 188 & 18 & $81.8\pm1.8$ & $83.1\pm9.2$ & \bm{$89.0\pm1.1$} & \xmark & \xmark & \xmark & $84.5\pm1.7$ & $\bm{87.3\pm1.5}$\\
\hline
IMDB-b & 1000 & 20 & $71.3\pm0.8$ & $63.1\pm0.1$& \bm{$73.0\pm0.9$} & $68.4\pm3.3$ & $71.2\pm3.9$ & \xmark & \xmark & $\bm{74.4\pm0.8}$\\
\hline
NCI1 & 4110 & 30 & $76.3\pm0.8$ & $73.2\pm1.8$ & \xmark & $76.9\pm1.9$ & \bm{$80.0\pm1.4$} & \xmark & $\bm{86.1\pm0.2}$ & $85.8\pm0.2$\\
\hline
NCI109 & 4127 & 30 & $\bm{75.6\pm0.7}$ & $74.3\pm1.5$ & \xmark & \xmark & \xmark & \xmark & $\bm{86.3\pm0.2}$ & \xmark\\
\hline
ENZYMES & 600 & 33 & $\bm{66.0\pm2.5}$ & $51.8\pm1.8$ & \xmark & $59.5\pm5.6$ & $59.6\pm4.5$ & \xmark & $59.9\pm1.1$ & $\bm{73.3\pm0.9}$\\
\hline
PROTEINS & 1113 & 39 & $75.7\pm0.7$ & $73.3\pm2.0$ & \xmark & $73.7\pm3.5$ & $73.3\pm4.0$ & $\bm{76.5\pm2.6}$ & $76.4\pm0.4$ & $\bm{77.9\pm0.8}$\\
\hline
MNIST & 10000 & 151 & $\bm{96.1\pm0.2}$ & $56.3\pm 0.7$ & \xmark & \xmark & \xmark & \xmark & \xmark & \xmark\\
\hline
D\&D & 1178 & 284 & $79.2\pm0.8$ & $58.6\pm0.1$ & \xmark & $75.0\pm3.5$ & $75.3\pm2.9$ & $\bm{80.8\pm2.3}$ & $79.2\pm0.4$ & $\bm{79.7\pm0.5}$\\
\hline
REDDIT-b & 2000 & 430 & $91.2\pm0.6$ & $75.7\pm1.0$ & $82.5\pm1.4$ & $87.8\pm2.5$ & $89.9\pm1.9$ & $\bm{91.4\pm1.5}$ & $89.3$ & \xmark\\
\hline
DLA & 1000 & 501 & $\bm{99.9\pm0.1}$ & $77.2\pm2.5$& \xmark & \xmark & \xmark & \xmark & \xmark & \xmark\\
\hline
REDDIT-5K & 4999 & 509 & $55.5\pm0.7$ & $47.9\pm0.3$ & $53.5\pm1.0$ & $53.8\pm1.4$ & $\bm{56.1\pm1.7}$ & \xmark & \xmark & \xmark\\
\hline
\end{tabular}
}
\smallskip
\caption{Accuracy on classification tasks. HG2V is trained over both \textit{TrainVal+Test} splits, without using labels due to its unsupervised nature. Model selection of C-SVM and hyper-parameters of HG2V have been done with 5-cross validation over \textit{TrainVal} split. We report on the accuracy over the \textit{Test} split, averaged over $10$ runs, and with standard deviation. Unavailable result marked as \xmark.}
\label{tab:results}
\end{table*}

\subsection{Datasets}
\noindent \textbf{Standard datasets.}
We use standard datasets from literature: PROTEINS, ENZYMES, D\&D, NCI1, NCI109, MUTAG, IMDB (binary and multi) and PTC\_FR downloaded from \cite{KKMMN2016}. We also use the challenging REDDIT (binary and 5K versions).  
  
\noindent \textbf{Synthetic datasets.} Additionally, we introduce a novel dataset for the community: Diffusion Limited Aggregation (DLA). The attributed graphs are created by a random process that makes the graphs scale free, DLA being known to be fractal objects. The graphs have then an interesting property that justifies the creation of a new benchmark. In addition it provides natural features for the nodes: the coordinates in space (and this is why we prefer it to the well known and usual scale-free network model that is the Barabasi-Albert model). We refer to Appendix~\ref{app:dla} for more details. The code can be found on \url{https://github.com/Algue-Rythme/DiffusionLimitedAgregation}
  
\noindent \textbf{Image datasets.} We convert MNIST and USPS popular datasets of the computer vision community into graphs, by removing blank pixels, and adding luminosity and $(x,y)$ coordinates as node features. To solve the task, the method should be able to recognize the shape of the graph, which is a global property. Due to the size of these datasets, we restrain ourselves to a subset of $10,000$ images randomly sampled (instead of the $70,000$ available).  
  
\noindent \textbf{Frankenstein dataset} was created in \cite{orsini2015} by replacing nodes labels of BURSI dataset with MNIST images.

\noindent \textbf{Pre-processing} All the continuous node attributes are centered and normalized. The discrete node labels use one hot encoding. When there is no node feature, we use the degree of the node instead. 

\subsection{Supervised classification} \label{s:supervised}

In the first task, the model is trained over all the attributed graphs in the dataset. The quality of these embeddings is assessed on a supervised classification task. The classifier used is C-SVM with RBF kernel from scikit-learn library.  

    \subsubsection{Training Procedure}
The embedding are trained over $10$ epochs. At each step, $8$ graphs are picked randomly from the dataset, and all the vocabulary from these graphs is used to produce positive and negative pairs (by cartesian product). Hence, the number of negative examples in each batch is higher than the number of positive examples. Consequently we normalize the loss of negative samples to reduce the unbalance. The optimizer is Adam \cite{kingma2014adam} and the learning rate follows a geometric decay, being divided by approximately $1000$ over the 10 epochs.

\subsubsection{Model selection} The relevant hyper-parameters of HG2V are the number of features $d\in\{16,128,256\}$ at each stage, the receptive field of Truncated Krylov $a\in\{2,4\}$, and the maximum depth of Loukas' coarsening $L\in\{3,5\}$. They are selected using a grid search. Five random split of the dataset are generated: \textit{TrainVal} (80\% of the data) and \textit{Test} (20\% of the data). HG2V is trained over \textit{TrainVal}. Then, a C-SVM is trained over \textit{TrainVal}, using 5-cross validation grid search for the selection of its hyper-parameters (C, Gamma). The average validation score of the best C-SVM classifier is used to select the hyper-parameters $(d^*,a^*,L^*)$ of HG2V. The average test score of the best C-SVM classifier (accuracy over \textit{Test} split) is reported in Table~\ref{tab:results}. Note that HG2V could be trained on the test set \textit{without} using labels, due to its unsupervised nature. We decided not doing it, to ensure fair comparison with the other methods.  

\begin{table}[!t]
 \centering
 %\raggedright
 %\hspace*{-0.4cm}
\begin{tabular}{|M{2cm}M{2cm}|M{2cm}|M{3cm}|}
\hline
\textbf{Training Set} & \textbf{Inference Set} & \textbf{Accuracy (Inference)} & \textbf{Delta with baseline (see Table~\ref{tab:results})}\\ 
\hline
\hline
MNIST & USPS  & $94.86$ & \xmark\\ 
\hline
USPS & MNIST  & $93.68$ & $-2.40$ \\
\hline
REDDIT-b & REDDIT-5K & $55.00$ & $-0.48$\\
\hline
REDDIT-5K & REDDIT-b & $91.00$ & $-0.15$\\
\hline
REDDIT-b & IMDB-b & $69.00$ & $-2.25$\\
\hline
REDDIT-5K & IMDB-b & $69.50$ & $-1.75$\\
\hline
MNIST & FASHION MNIST & $83.35$ & \xmark\\
\hline
\end{tabular}
\smallskip
\caption{Accuracy on classification tasks by training on some input distribution and performing inference on an other. The hyper-parameters selected are identical to the ones of Table~\ref{tab:results}.  
%\vspace*{-8mm}
}
\label{tab:transferresults}
\end{table}

\subsubsection{Baselines}

We compare our work to various baselines of the literature:

\noindent \textbf{Kernel Methods} All the results reported are extracted from the corresponding papers \cite{kriege2016valid,nikolentzos2019graph}, giving an idea of the best possible performance achievable by \WL-Optimal Assignment~\cite{kriege2016valid} and the Wasserstein Wesfeiler-Lehman~\cite{togninalli2019wasserstein} graph kernels. It almost always outperform inductive methods based on neural networks. However, like many kernel-based method, they have quadratic time complexity in the number of graphs, which is prohibitive for dealing with large datasets. Due to its very high scores and its sensibly different design, we consider them apart and we never highlight their results (even if the best).  

\noindent \textbf{DiffPool, GIN} We report the results of the rigorous benchmarks of \cite{Errica2020A}, including the popular DiffPool \cite{NIPS2018_7729} and GIN \cite{xu2018powerful}. Those algorithms are end-to-end differentiable, but they are \textbf{supervised}. DiffPool also relies on graph coarsening, but their pooling function is learned, while Loukas coarsening is task agnostic.  
  
\noindent \textbf{MinCutPool} We report the results of the original paper \cite{bianchi2020spectral}. Note that they used not only node degree, but also clustering coefficient as node features. Consequently they benefit from additional information compared to our setting. 

\noindent \textbf{Infograph} We also report the results of Infograph \cite{Sun2020InfoGraph}. It is the closest method to our work: it is unsupervised, end-to-end differentiable, and also relies on mutual information maximization, but it does not benefit of coarsening. Infograph is currently the state of the art in \textbf{unsupervised} graph representation learning.  

\subsubsection{Results}
We note that get substantial improvements over Graph2Vec baseline for many datasets, more specifically when the graph are big and carry high dimensional features.   
  
For FRANKENSTEIN, if we connect a randomly initialized 2-layer CNN to the input of the model for better feature extraction, the results are improved and reach $66.5\pm0.4\%$ which is a noticeable improvement. Thanks to the end-to-end differentiability of the model, the CNN is trained with backpropagation, benefiting from the unsupervised loss signal.  
  
On the notably difficult REDDIT-B and REDDIT-5K we reach high results, comparable to SOTA. The coarsening operation is beneficial to these datasets, considering the size of the graphs. On datasets with smaller graphs, the results are less significant.  
  
\subsubsection{Computation time.}
Training on only $1$ epoch already provides a strong baseline for molecule datasets, and lasts less than 1 minute, using GTX 1080 GPU. The most challenging dataset was REDDIT-MULTI-5K, trained on V100 GPU, with $5000$ graphs, an average of $508$ nodes and $595$ edges per graph. The pre-computation of Loukas's coarsening required 40 minutes (that can be improved with parallelization, not implemented). After this step, the runtime never exceed 190 seconds per epoch, for a total training time of 70 minutes.  

\subsection{Inductive learning} \label{s:inductive}

The method is inductive: it allows us to train \textbf{HG2V} on a dataset and test on another dataset.  The training set is used to extract relevant features, that are expected to be seen again during inference. In previous section~\ref{s:supervised} we showed that when the domain of training set and inference set are the same (different splits of the same dataset) it provides good results. In this section, we prove that even when the training set and the inference set are disjoint, the model is still able to produce good representations.  
  
We emphasize that this property is specific to unsupervised inductive methods. Hence, it is not possible to perform such experiment on Graph2Vec (transductive), or on models trained with supervised loss (e.g GIN, DiffPool). To the best of our knowledge, no such domain adaption experiments have been done in graph classification tasks. Consequently, we miss comparisons to other methods. A more in-depth analysis of this property is left as future work. Our preliminary results with this regard are summarized in Table~\ref{tab:transferresults}, and we hope that it will encourage community to perform similar experiments.\\

\noindent \textbf{Hyper-parameters} The best hyper-parameters found in the previous section are kept as is without further hyper-parameter tuning. The goal was not to reach the best possible result, but ensures that we can reuse trained weights to produce quickly good embeddings.

\noindent \textbf{USPS} is a dataset of handwritten digits similar to MNIST, hence we expect that a model trained on it can also embed MNIST graphs.
  
\noindent \textbf{FASHION MNIST} have been introduced in \cite{xiao2017/online} is built similarly to MNIST: a set of $28\times 28$ grayscale images. We only work on a subset of $10,000$ training examples.  

\begin{table}[!t]
 \centering
 %\raggedright
 %\hspace*{-1cm}
\begin{tabular}{|cc|cc|cc|cc|c|}
\hline
& & \multicolumn{2}{c|}{\textbf{HG2V}} & \multicolumn{2}{c|}{\textbf{Graph2Vec+GNN}} & \multicolumn{2}{c|}{\textbf{Graph2Vec+Loukas}} & \textbf{Graph2Vec}\\
\textbf{DATASET} & \#nodes & Accuracy & Delta & Accuracy & Delta & Accuracy & Delta &\\
\hline
\hline
\textbf{IMDB-b} & 20 & $70.85$ & $+7.75$ & $70.70$ & $+7.60$ & $57.5$ & $-5.60$ & $63.10$\\
\hline
\textbf{NCI1} & 30 & $77.97$ & $+4.75$ & $75.40$ & $+2.18$ & $65.45$ & $-7.77$ & $73.22$\\
\hline
\textbf{MNIST} & 151 & $95.83$ & $+39.56$ & $91.05$ & $+34.78$ & $72.5$ & $+16.23$ & $56.27$\\ 
\hline
\textbf{D\&D} & 284 & $78.01$ & $+19.37$ & $79.26$ & $+13.16$ & $66.10$ & $+7.45$ & $58.64$\\
\hline
\textbf{REDDIT-B} & 430 & $91.95$ & $+16.23$ & \textbf{OOM} & \xmark & $82.50$ & $+6.78$ & $75.72$\\ 
\hline
\end{tabular}
\smallskip
\caption{Ablative studies. The accuracy on test set is in column "Accuracy". The column "Delta" corresponds to the difference in average accuracy with Graph2Vec. \textbf{OOM} is Out of Memory Error.}
\label{tab:ablativeresults}
\end{table}
  
\subsubsection{Results} We see that the model can easily transfer from one domain to another. MNIST seems to be a better prior than USPS, which is a behavior previously observed in transfer learning. On the other datasets, the accuracy drops compared to the baseline: it is expected, since the new graphs are out of distribution of the training set. Nevertheless, even in this unfavorable setting, the accuracy remains within a comparable range to the baseline.  
  
We conclude that the inductivity property is not only theoretical, but also a property that can be checked in practice, without specific hyper-parameter tuning or costly manual adaptation.

\subsection{Ablative Studies} \label{s:ablative}

We perform ablative studies by removing separately Loukas coarsening and GNN. If we remove both, we fall back to Graph2Vec. The dimension of the embeddings is chosen equal to $1024$.  
  
\noindent \textbf{Graph2Vec+GNN} We remove Loukas Coarsening. The only difference with Graph2Vec is the replacement of \WL\ iterations with forward pass through a GNN. All the attributes available are used.\\
\noindent \textbf{Graph2Vec+Loukas} We remove GNN. The resulting algorithm is Graph2Vec applied on the sequence of coarsened graphs. On the coarsened graphs, new labels are generated by concatenating and hashing the labels of the nodes pooled (like a \WL\ iteration would do). The sequence of (unconnected) graphs is fed into Graph2Vec. Continuous attributes are ignored because \WL\ can not handle them.\\
The results are summarized in Table~\ref{tab:ablativeresults}.

\subsubsection{Results}
On datasets with small graphs (less than 30 nodes in average) the use of coarsening is hurtful, resulting in loss compared to Graph2Vec. As soon the graphs get big, coarsening leads to huge improvements in accuracy. We also notice that the usage of GNN and its ability to handle continuous attributes, and to be trained to extract co-occurring features, leads to significant improvements on all datasets.  
  
\begin{figure}[!t]
    \centering
    \includegraphics[width=13cm,height=29cm,keepaspectratio]{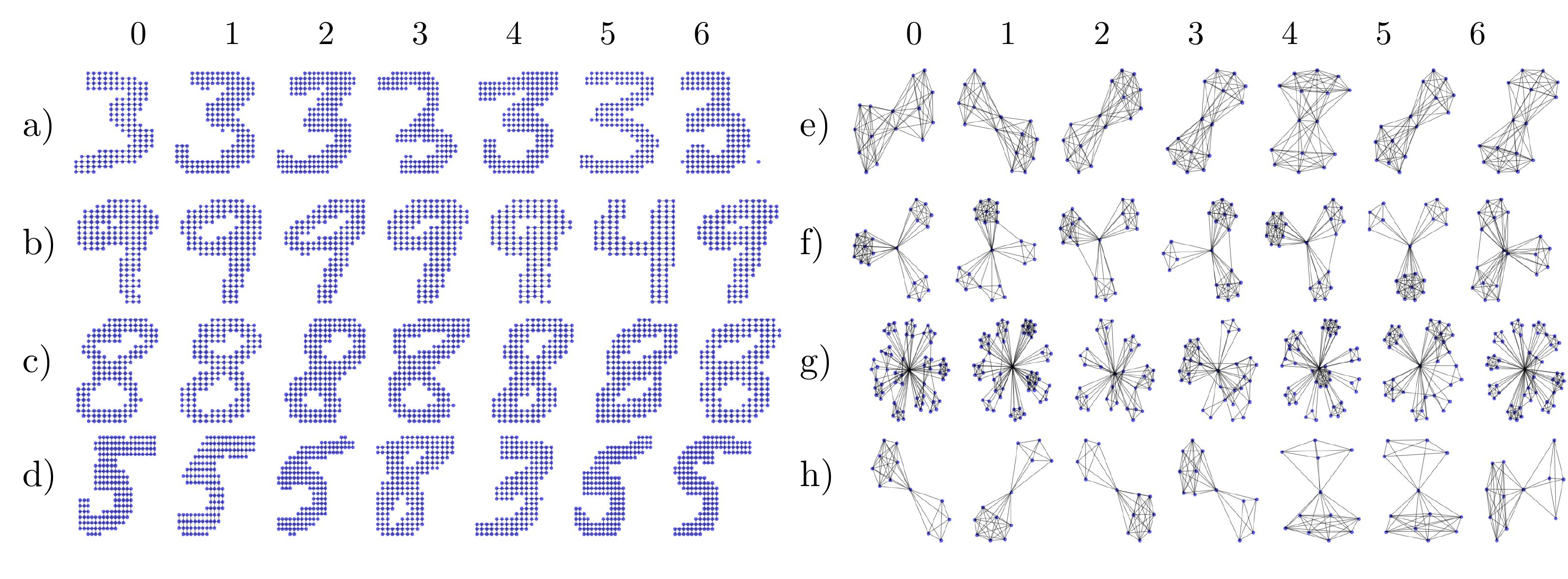}
    \caption{Six nearest neighbors of graphs in the learned latent space for four graphs from IMDB-b and MNIST. Column 0 corresponds to a randomly chosen graph, then the six nearest neighbors are drawn in increasing distance order from left to right (1 to 6).}
    \label{fig:nearest_neighbors}
\end{figure}

\subsection{Latent space}

Figure~\ref{fig:nearest_neighbors} illustrates  some graphs (leftmost column) with their six closest neighbors in the learned latent space (right columns 1 to 6), from the closest to the farthest, taken from MNIST and IMDB. We observe that isomorphic graphs share similar embeddings, and even when the graphs are not quite isomorphic they exhibit similar structures.  

\section{Conclusion}

We proposed a new method for Graph Representation Learning. It is fully unsupervised, and it learns to extract features by Mutual Information maximization. The usage of Loukas' coarsening allows us to tackle all scales simultaneously, thanks to its capacity to preserve the graph spectrum. The method is inductive, and can be trained on only a fraction of the data before being used in transfer learning settings. Despite being unsupervised, the method produces high quality embeddings leading to competitive results on classification tasks against supervised methods.

\appendix

\section{Proofs}

\subsection{Proof of lemma~\ref{theorem:continuity}}

\begin{Lemma}[GNN continuity.]
As $d_\mathbb{G}$ is a metric, then GNNs with continuous activation function are continuous under the induced topology.
\end{Lemma}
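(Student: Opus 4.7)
The plan is to establish continuity componentwise, using an optimal coupling associated to $d_\mathbb{G}$ as the device that matches nodes across graphs. First I would decompose a GNN layer as in Eq.~\eqref{eq:TruncatedKruylov} into three elementary operations: polynomial graph filters of the form $Z \mapsto \tilde A^k Z$ for $0 \le k \le a$, affine maps $X \mapsto X\theta + b$, and pointwise continuous activations such as $\tanh$. A full GNN is a composition of such layers, followed by sum- and max-pooling over nodes to produce the graph embedding. Continuity of the normalized adjacency $\tilde A = D^{-\sfrac{1}{2}}(A+I)D^{-\sfrac{1}{2}}$ as a function of $A$ is harmless here because the added identity guarantees strictly positive degrees.

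Given two graphs $g_1, g_2$, let $\pi^\star$ be an optimal coupling realizing $d_\mathbb{G}(g_1,g_2)$. The central step would be to prove by induction on layer depth the following statement: the transformed node features $Z^l_1$ and $Z^l_2$ satisfy $\sum_{u,v}\pi^\star(u,v)\,|Z^l_1(u) - Z^l_2(v)| \le C_l \, d_\mathbb{G}(g_1,g_2)$ for some constant $C_l$ depending on the network weights and the depth $l$. For the base case this is immediate from the definition of $d_\mathbb{G}$. For the induction, I would expand $\tilde A_1^k Z^l_1(u) - \tilde A_2^k Z^l_2(v)$ against $\pi^\star$ and bound each summand by terms of the form $|\tilde A_1(u,u') - \tilde A_2(v,v')|$ and $|Z^l_1(u') - Z^l_2(v')|$ weighted by $\pi^\star(u',v')$, both controlled by the inductive hypothesis together with the cost in Eq.~\eqref{eq:gromov_wasserstein}. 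Lipschitzness of the affine map and of the continuous activation then closes the induction.

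For the pooling step, I would exploit that the marginals of $\pi^\star$ are uniform, which gives the identity $\frac{1}{|V_1|}\sum_u f(u) - \frac{1}{|V_2|}\sum_v f'(v) = \sum_{u,v}\pi^\star(u,v)\bigl(f(u) - f'(v)\bigr)$, converting the bound above into continuity of the mean-pooled embedding. Max-pooling inherits continuity from boundedness of $\tanh$ combined with the same matching argument applied to the node attaining the maximum. Composing all of these yields continuity of the final graph embedding as a function of the graph in $(\mathbb{G}, d_\mathbb{G})$.

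The main obstacle I anticipate is the careful bookkeeping when $|V_1| \ne |V_2|$: since $\pi^\star$ is generally neither a permutation nor even a doubly stochastic matrix, it must be carried through all intermediate layers, and the sum-pool output must be interpreted with the proper normalization coming from the uniform marginals. A secondary point is that interleaving Loukas' coarsening between layers introduces a new pooling map $\mathcal{P}$ whose Lipschitz behaviour under $d_\mathbb{G}$ is not obvious in general; since the lemma as stated concerns a plain GNN, I would either treat coarsening as fixed within each stage or invoke the spectral guarantees of \cite{loukas2018graph} to control the induced perturbation, leaving the hierarchical version to a separate argument.
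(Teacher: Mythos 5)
Your strategy is sound and shares the paper's central idea --- reuse a coupling between the input graphs as a (suboptimal) coupling between the output graphs, and show the transported feature discrepancy is controlled --- but the execution is genuinely different. The paper works in the generic message-passing (MPNN) framework: it writes the layer as $h^{t+1}(u)=U_t(h^t(u),\sum_{v\in\mathcal{N}(u)}M_t(\cdot))$, invokes continuity of $M_t$ and $U_t$ and of finite sums, and then argues sequentially ($g_n\to g$ implies $F(g_n)\to F(g)$) by plugging the transportation plans $\pi_n$ of the convergent sequence into the bound \eqref{eq:gromov_wasserstein} for the transformed graphs; it never introduces Lipschitz constants, never treats pooling, and never commits to the specific Truncated Krylov form \eqref{eq:TruncatedKruylov}. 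You instead prove a quantitative statement, $\sum_{u,v}\pi^\star(u,v)\,|Z^l_1(u)-Z^l_2(v)|\le C_l\,d_\mathbb{G}(g_1,g_2)$, by induction on depth for the concrete layer. Your route is stronger where it succeeds (it gives a modulus of continuity, not just continuity) and it covers the readout, which the paper silently omits; the paper's route is weaker but avoids the constants entirely, which matters because qualitative continuity is all the lemma claims.

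Two steps of your plan need more than you allow for. First, the cost in \eqref{eq:gromov_wasserstein} controls $|A_1(u,u')-A_2(v,v')|$ only in the $\pi^\star\otimes\pi^\star$-averaged sense, whereas your filter step needs control of $|\tilde A_1(u,u')-\tilde A_2(v,v')|$; the degree normalization is a nonlinear function of entire rows of $A$, and after reweighting by the uniform marginals the natural comparison is between $|V_1|\tilde A_1$ and $|V_2|\tilde A_2$, so ``harmless because degrees are positive'' does not dispose of it --- this is a separate estimate. Second, your constants $C_l$ inevitably involve $|V_1|$, $|V_2|$ and the degrees, none of which is controlled by $d_\mathbb{G}$ (a graph and its node-duplicated blow-up are at distance near zero with very different sizes), so a uniform Lipschitz bound is unobtainable; you must retreat to continuity at a fixed target graph, which is what the paper's sequential formulation does implicitly. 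The max-pooling step has the same flavour: an averaged bound against $\pi^\star$ does not directly bound the discrepancy at the maximizing node without a lower bound on the positive entries of $\pi^\star$. None of these is fatal for proving the lemma as stated, but each requires an argument your sketch currently waves at. Your remark that coarsening should be excluded from the lemma is correct and consistent with the paper.
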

\begin{proof}
We use Message Passing Neural Networks (MPNN) framework  introduced in \cite{gilmer2017neural}. In order to demonstrate our theorem, we use the same notations as their paper. Let $g=(V,A,Z)\in\mathbb{G}$. Let $u\in V$ a node, and $t$ the index of a layer. Then:
\begin{align}
   h^0(u) &= Z(u)\\
   m^{t+1}(u) & = \sum\limits_{v\in\mathcal{N}(u)} M_t(h^t(u), h^t(v), e_{uv})\\
   h^{t+1}(u) & = U_t( h^t(u), m^{t+1}(u))
\end{align}

Every GNN layer fall under MPNN framework. When the activation function used is continuous, the functions $M_t$ and $U_t$ are also continuous, because they involve matrix products (which are continuous operations). Moreover, the sum over $\mathcal{N}(u)$ is finite because we consider finite graphs. Consequently, $h^{t+1}(u)$ depend continuously of $\{h^t(v), v\in\mathcal{N}(u)\}$ through $S_t=U_t\circ M_t$. By composition, the result holds for any number of layers $T$. Let $F$ the resulting function on graph domain $\mathbb{G}$, i.e if $(V',A',Z')=g'=F(g)$ then $A=A'$ and $V=V'$ (GNNs do not modify topology), and for every node $u\in V$ we have $Z'(u)=h^{T}(u)$.  
  
Let $(g_n)_{n\in\mathbb{N}}\in\mathbb{G}^{\mathbb{N}}$ a sequence of attributed graphs such that $\lim_{n\to\infty} g_n=g$ under the topology induced by $d_\mathbb{G}$. Let $(\pi_n:V_n\times V)_{n\in\mathbb{N}}$ the corresponding set of transportation plans between $g_n$ and $g$. Let $g'_n=(V'_n,A'_n,Z'_n)=F(g_n)$ and $g'=(V',A',Z')=F(g)$. Then:
\begin{equation}
\label{eq:gnnbound}
    \begin{aligned}
    d_\mathbb{G}(g'_n,g') &\leq \sum_{u,u',v,v'}\pi_n(u,v)\pi_n(u',v')\left(|A'_n(u,u')-A'(v,v')|+|Z'_n(u)-Z'(v)|+|Z'_n(u')-Z'(v')|\right)\\
    &=\sum_{u,u',v,v'}\pi_n(u,v)\pi_n(u',v')\left(|A_n(u,u')-A(v,v')|+|Z'_n(u)-Z'(v)|+|Z'_n(u')-Z'(v')|\right) 
\end{aligned}
\end{equation}
Now, because $\lim_{n\to\infty} g_n=g$ we have necessarily:
\begin{align}
    \lim_{n\to\infty} &\pi_n(u,v)\pi_n(u',v')|Z_n(u)-Z(u)| = 0\\
    \lim_{n\to\infty} &\pi_n(u,v)\pi_n(u',v')|A_n(u,u')-A(v,v')| = 0
\end{align}
The continuity of $S_t$ allows to further conclude that:
\begin{equation}
\lim_{n\to\infty} \pi_n(u,v)\pi_n(u',v')|Z'_n(u)-Z'(u)| = 0
\end{equation}
Finally, the right hand size of \eqref{eq:gnnbound} must have limit $0$, hence:
\begin{equation}
\lim_{n\to\infty} g'_n=g'
\end{equation}
We just proved that $\lim_{n\to\infty} g_n=g$ implies $\lim_{n\to\infty} g'_n=g'$, which is precisely the definition of $F$ being continuous w.r.t the topology induced by $d_\mathbb{G}$.  
\end{proof}

\section{Additional visualizations of the embeddings}

We present other randomly sampled graphs and their six closest neighbors from MNIST-Graph, IMDB and PTC Datasets in, respectively, Figures \ref{fig:mnist_neighbors}, \ref{fig:imdb_neighbors} and \ref{fig:ptc_neighbors}.

\begin{figure}%[h]
    \centering
    \includegraphics[width=9cm,height=29cm,keepaspectratio]{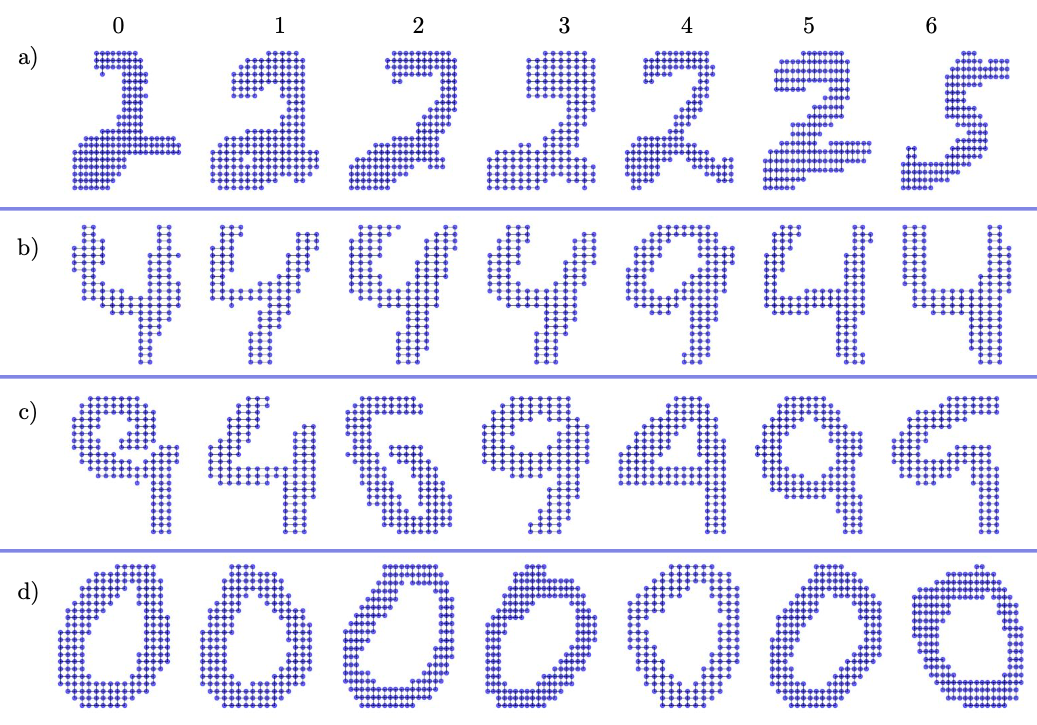}
    \caption{Six nearest neighbors of graphs in latent space from MNIST for four graphs. Column 0 correspond to the randomly chosen graph then the six nearest neighbors are draw in increasing distance order from left to right (from 1 to 6).}
    \label{fig:mnist_neighbors}
\end{figure}

\begin{figure}[h]
    \centering
    \includegraphics[width=9cm,height=29cm,keepaspectratio]{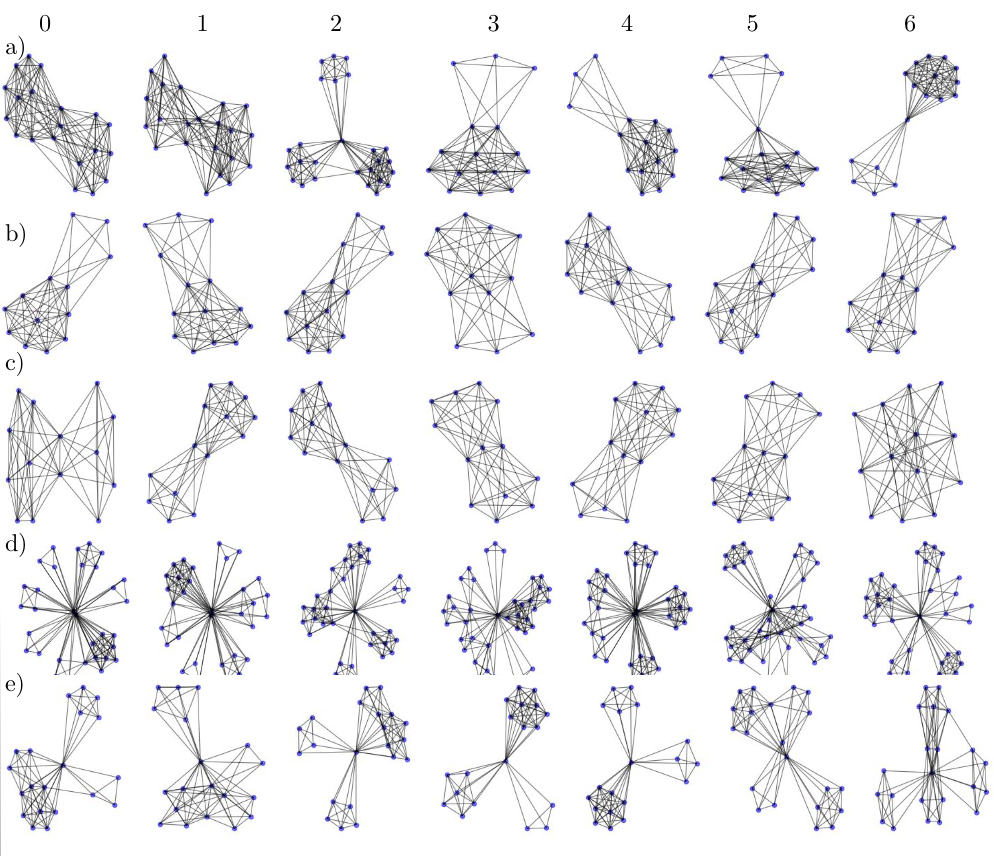}
    \caption{Six nearest neighbors of graphs in latent space from IMDB for ten graphs. Column 0 correspond to the randomly chosen graph then the six nearest neighbors are draw in increasing distance order from left to right (from 1 to 6).}
    \label{fig:imdb_neighbors}
\end{figure}

\begin{figure}[h]
    \centering
    \includegraphics[width=9cm,height=29cm,keepaspectratio]{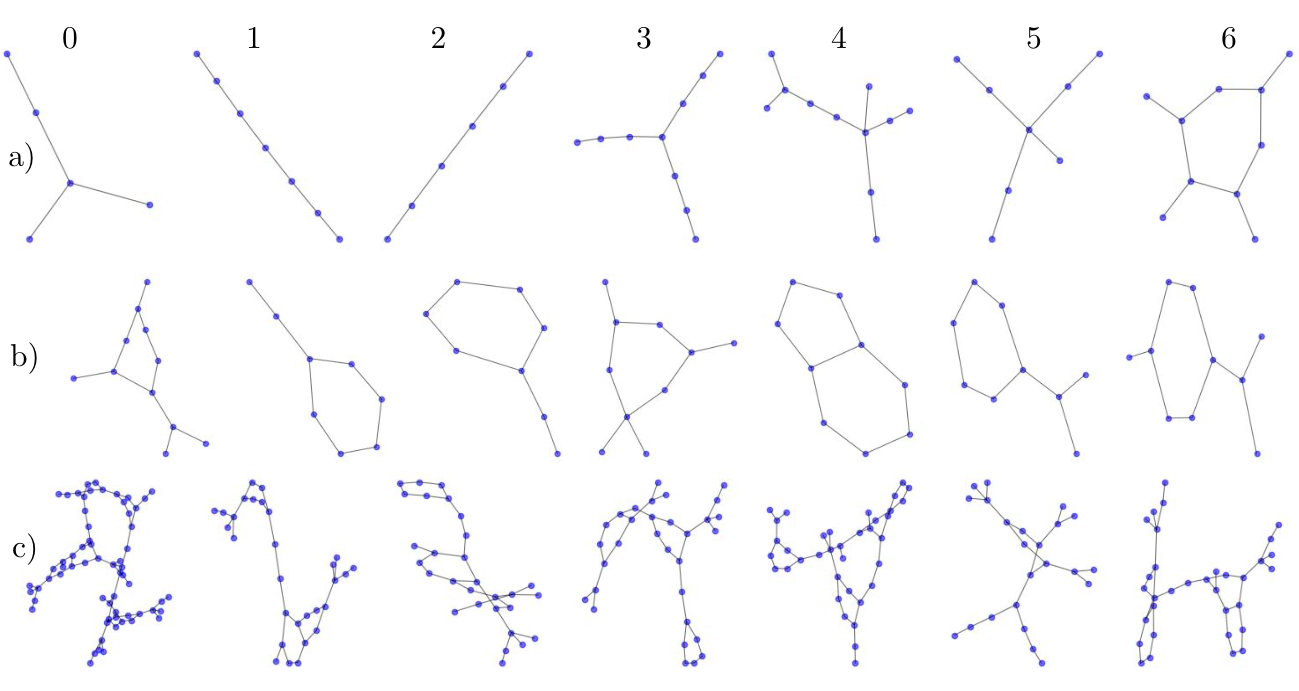}
    \caption{Six nearest neighbors of graphs in latent space from PTC for six graphs. Column 0 correspond to the randomly chosen graph then the six nearest neighbors are draw in increasing distance order from left to right (from 1 to 6).}
    \label{fig:ptc_neighbors}
\end{figure}

\section{Details about the datasets}
In this section we give additional details on some datasets we used.  
\subsection{DLA}
The DLA dataset has been artificially generated with Diffusion Limited Aggregation (\cite{witten1981diffusion}), a random process that creates cluster of particles following a Brownian motion. Particles are added one by one, and when two particles touch, they can aggregate with some probability of stickiness $p$. 
  
The resulting structure is a tree, each particle being a node and each link corresponding to a bond. 
The resulting graphs have scale free properties \cite{witten1983diffusion}. The degree distribution of the nodes and their position in space will depend on $p$. 
  
We generated a total of $1000$ graphs with $500$ nodes each. This dataset is splited into two classes, one with stickiness $p=1$ and the other with stickiness $p=0.05$. The attributes are the $x$ and $y$ coordinates of the particles following a 2D Brownian motion for simplicity. 
  
It has been observed that Graph2Vec is unable to reach a good accuracy by relying solely on node degree distribution, while Hierarchical Graph2Vec is able to use the features and reach near perfect accuracy. The code that generated this dataset can be found here: \url{https://github.com/Algue-Rythme/DiffusionLimitedAgregation}
\subsection{MNIST and USPS}
 We produce graphs from MNIST (resp. USPS) handwritten digits dataset . The graphs are created by removing all the pixel with luminosity equals to $0$ (resp. less than $0.3$), by mapping each remaining pixel to a node, and then by adding the $x$ and $y$ coordinate of each node to the vector of attributes. Due to the size of MNIST ($70,000$ images in total) we kept only the test split ($10,000$ images) to train the embeddings, and kept the whole dataset for USPS (9298 images). However these datasets and their graphs remain way larger than the other standard benchmarks.  


\begin{thebibliography}{999}
\bibitem{bianchi2019hierarchical}
Bianchi, F.M., Grattarola, D., Livi, L., Alippi, C.: Hierarchical
  representation learning in graph neural networks with node decimation pooling
  (2019)
  
\bibitem{bianchi2020spectral} Bianchi, Filippo Maria, Daniele Grattarola, and Cesare Alippi. "Spectral clustering with graph neural networks for graph pooling." In Proceedings of the 37th international conference on Machine learning. ACM. 2020.

\bibitem{bravohermsdorff2019unifying}
Bravo~Hermsdorff, G., Gunderson, L.: A unifying framework for
  spectrum-preserving graph sparsification and coarsening. In: Advances in
  NeurIPS 2019, pp. 7736--7747 (2019)

\bibitem{defferrard2016convolutional}
Defferrard, M., Bresson, X., Vandergheynst, P.: Convolutional neural networks
  on graphs with fast localized spectral filtering. In: Advances in Neural
  Information Processing Systems 29, pp. 3844--3852 (2016)
  
\bibitem{kingma2014adam}
Kingma DP, Ba J. Adam: A method for stochastic optimization. arXiv preprint arXiv:1412.6980. 2014 Dec 22.

\bibitem{Dorfler_2013}
Dorfler, F., Bullo, F.: Kron reduction of graphs with applications to
  electrical networks. IEEE Transactions on Circuits and Systems I: Regular
  Papers  \textbf{60}(1),  150–163 (Jan 2013)
  
\bibitem{Errica2020A}
Federico Errica, Marco Podda, Davide Bacciu and Alessio Micheli: A Fair Comparison of Graph Neural Networks for Graph Classification. In: International Conference on Learning Representations (ICLR 2020)

\bibitem{NIPS2013_5155}
Feragen, A., Kasenburg, N., Petersen, J., de~Bruijne, M., Borgwardt, K.:
  Scalable kernels for graphs with continuous attributes. In: Advances in
  NeurIPS 26, pp. 216--224 (2013)

\bibitem{Gama_2019}
Gama, F., Marques, A.G., Leus, G., Ribeiro, A.: Convolutional neural network
  architectures for signals supported on graphs. IEEETSP  \textbf{67}(4),
  1034–1049 (Feb 2019)

\bibitem{hamilton2017inductive}
Hamilton, W., Ying, Z., Leskovec, J.: Inductive representation learning on
  large graphs. In: Advances in NeurIPS 2017, pp. 1024--1034 (2017)

\bibitem{hamilton2017representation}
Hamilton, W.L., Ying, R., Leskovec, J.: Representation learning on graphs:
  Methods and applications (2017), published in the IEEE Data Engineering
  Bulletin, September 2017

\bibitem{hjelm2018learning}
Hjelm, R.D., Fedorov, A., Lavoie-Marchildon, S., Grewal, K., Bachman, P.,
  Trischler, A., Bengio, Y.: Learning deep representations by mutual
  information estimation and maximization. In: ICLR (2019)
  
\bibitem{gilmer2017neural}
Justin Gilmer and Samuel S. Schoenholz and Patrick F. Riley and Oriol Vinyals and George E. Dahl: Neural Message Passing for Quantum Chemistry. In:  International Conference on Machine Learning (2017).

\bibitem{KKMMN2016}
Kersting, K., Kriege, N.M., Morris, C., Mutzel, P., Neumann, M.: Benchmark data
  sets for graph kernels (2016), \url{http://graphkernels.cs.tu-dortmund.de}

\bibitem{kipf2017semi}
Kipf, T.N., Welling, M.: Semi-supervised classification with graph
  convolutional networks. In: ICLR (2017)
  
\bibitem{loveloginequalities}
Love, E: Some Logarithm Inequalities.
In: The Mathematical Gazette, 64(427), 55-57. (1980)

\bibitem{ICML2012Kriege_542}
Kriege, N., Mutzel, P.: Subgraph matching kernels for attributed graphs. In:
  Langford, J., Pineau, J. (eds.) Proceedings of ICML 2012. pp. 1015--1022.
  ICML '12, Omnipress (July 2012)

\bibitem{kriege2016valid}
Kriege, N.M., Giscard, P.L., Wilson, R.: On valid optimal assignment kernels
  and applications to graph classification. In: Advances in NeurIPS 2016, pp.
  1623--1631 (2016)

\bibitem{loukas2018graph}
Loukas, A.: Graph reduction with spectral and cut guarantees. JMLR
  \textbf{20}(116),  1--42 (2019)

\bibitem{Loukas2020What}
Loukas, A.: What graph neural networks cannot learn: depth vs width. In: ICLR
  (2020)
  
\bibitem{networkx}
Aric A. Hagberg, Daniel A. Schult and Pieter J. Swart, “Exploring network structure, dynamics, and function using NetworkX”, in Proceedings of the 7th Python in Science Conference (SciPy2008), Gäel Varoquaux, Travis Vaught, and Jarrod Millman (Eds), (Pasadena, CA USA), pp. 11–15, Aug 2008

\bibitem{NIPS2019_8568}
L\"{o}we, S., O' Connor, P., Veeling, B.: Putting an end to
  end-to-end: Gradient-isolated learning of representations. In: Advances in
  NeurIPS 32, pp. 3039--3051 (2019)

\bibitem{luan2019break}
Luan, S., Zhao, M., Chang, X.W., Precup, D.: Break the ceiling: Stronger
  multi-scale deep graph convolutional networks. In: Advances in Neural
  Information Processing Systems 32, pp. 10945--10955 (2019)

\bibitem{maron2019provably}
Maron, H., Ben-Hamu, H., Serviansky, H., Lipman, Y.: Provably powerful graph
  networks. In: Advances in NeurIPS 32, pp. 2156--2167 (2019)

\bibitem{melamud-goldberger-2017-information}
Melamud, O., Goldberger, J.: Information-theory interpretation of the skip-gram
  negative-sampling objective function. In: Proceedings of ACL 2017 (Volume 2:
  Short Papers). Vancouver, Canada (Jul 2017)
  
\bibitem{word2vec}
Mikolov, T., Sutskever, I., Chen, K., Corrado, G.S. and Dean, J., 2013. Distributed representations of words and phrases and their compositionality. In Advances in neural information processing systems (pp. 3111-3119).

\bibitem{Morris_2016}
Morris, C., Kriege, N.M., Kersting, K., Mutzel, P.: Faster kernels for graphs
  with continuous attributes via hashing. ICDM 2016  (Dec 2016)

\bibitem{morris2018weisfeiler}
Morris, C., Ritzert, M., Fey, M., Hamilton, W.L., Lenssen, J.E., Rattan, G.,
  Grohe, M.: Weisfeiler and leman go neural: Higher-order graph neural
  networks. In: Proceedings of the AAAI Conference on Artificial Intelligence.
  vol.~33, pp. 4602--4609 (2019)

\bibitem{narayanan2017graph2vec}
Narayanan, A., Chandramohan, M., Venkatesan, R., Chen, L., Liu, Y., Jaiswal,
  S.: graph2vec: Learning distributed representations of graphs. arXiv preprint
  arXiv:1707.05005  (2017)

\bibitem{nikolentzos2019graph}
Nikolentzos, G., Siglidis, G., Vazirgiannis, M.: Graph kernels: A survey. arXiv
  preprint arXiv:1904.12218  (2019)

\bibitem{NIPS2016_6066}
Nowozin, S., Cseke, B., Tomioka, R.: f-gan: Training generative neural samplers
  using variational divergence minimization. In: Advances in NeurIPS 2016, pp.
  271--279 (2016)

\bibitem{orsini2015}
Orsini, F., Frasconi, P., De~Raedt, L.: Graph invariant kernels. In:
  Proceedings of IJCAI’15. p. 3756–3762 (2015)
  
\bibitem{pmlr-v97-lee19c}
Lee, J., Lee, I. and Kang, J. Self-Attention Graph Pooling.  (PMLR,2019)

\bibitem{10.5555/2984093.2984279}
Shervashidze, N., Borgwardt, K.M.: Fast subtree kernels on graphs. In:
  Proceedings of NIPS 2019. p. 1660–1668. NIPS’09 (2009)

\bibitem{shervashidze2011weisfeiler}
Shervashidze, N., Schweitzer, P., Leeuwen, E.J.v., Mehlhorn, K., Borgwardt,
  K.M.: Weisfeiler-lehman graph kernels. JMLR  \textbf{12}(Sep),  2539--2561
  (2011)

\bibitem{Sun2020InfoGraph}
Sun, F.Y., Hoffman, J., Verma, V., Tang, J.: Infograph: Unsupervised and
  semi-supervised graph-level representation learning via mutual information
  maximization. In: ICLR (2020)

%\bibitem{taheri2018learning} Taheri, A., Gimpel, K., Berger-Wolf, T.: Learning graph representations with recurrent neural network autoencoders. KDD Deep Learning Day  (2018)

\bibitem{togninalli2019wasserstein}
Togninalli, M., Ghisu, E., Llinares-L\'{o}pez, F., Rieck, B., Borgwardt, K.:
  Wasserstein weisfeiler-lehman graph kernels. In: Advances in Neural
  Information Processing Systems 32, pp. 6439--6449 (2019)
  
\bibitem{vayer:hal-02174322}
Vayer, Titouan and Chapel, Laetitia and Flamary, R{\'e}mi and Tavenard, Romain and Courty, Nicolas:
Optimal Transport for structured data with application on graphs. In:
ICML 2019 - 36th International Conference on Machine Learning (2019)

\bibitem{velickovic2018graph}
Veličković, P., Cucurull, G., Casanova, A., Romero, A., Liò, P., Bengio, Y.:
  Graph attention networks. In: ICLR (2018)

\bibitem{velickovic2018deep}
Veličković, P., Fedus, W., Hamilton, W.L., Liò, P., Bengio, Y., Hjelm, R.D.:
  Deep graph infomax. In: ICLR (2019)

\bibitem{vishwanathan2010graph}
Vishwanathan, S.V.N., Schraudolph, N.N., Kondor, R., Borgwardt, K.M.: Graph
  kernels. JMLR  \textbf{11}(Apr),  1201--1242 (2010)

\bibitem{Weisfeiler1968ReductionOA}
Weisfeiler, B.Y., Leman, A.A.: Reduction of a graph to a canonical form and an
  algebra arising during this reduction. In: Journal of Applied Mathematics and
  Physics (1968)
  
\bibitem{witten1981diffusion}
Witten Jr, T.A. and Sander, L.M., 1981. Diffusion-limited aggregation, a kinetic critical phenomenon. Physical review letters, 47(19), p.1400.

\bibitem{witten1983diffusion}
Witten, T.A. and Sander, L.M., 1983. Diffusion-limited aggregation. Physical Review B, 27(9), p.5686.

\bibitem{wu2019comprehensive}
Wu, Z., Pan, S., Chen, F., Long, G., Zhang, C., Philip, S.Y.: A comprehensive
  survey on graph neural networks. IEEE Transactions on Neural Networks and
  Learning Systems  (2020)
  
\bibitem{xiao2017/online}
Fashion-MNIST: a Novel Image Dataset for Benchmarking Machine Learning Algorithms. Han Xiao, Kashif Rasul, Roland Vollgraf. arXiv:1708.07747

\bibitem{xu2018powerful}
Xu, K., Hu, W., Leskovec, J., Jegelka, S.: How powerful are graph neural
  networks? In: ICLR (2019)
  
\bibitem{NIPS2018_7729}
Ying, Z., You, J., Morris, C., Ren, X., Hamilton, W., Leskovec, J.:
  Hierarchical graph representation learning with differentiable pooling. In:
  Advances in NeurIPS 2018, pp. 4800--4810 (2018)

\end{thebibliography}
\end{document}